\def\ps@pprintTitle{%
  \let\@oddhead\@empty
  \let\@evenhead\@empty
  \def\@oddfoot{}%
  \def\@evenfoot{}%
}
\newtheorem{definition}{Definition}
\newtheorem{theorem}{Theorem}
\newtheorem{lemma}{Lemma}
\date{}
\theoremstyle{plain}
\newtheorem{assumption}{Assumption}
\newcommand{\R}{\mathbb{R}}
\newcommand{\N}{\mathbb{N}}
\newcommand{\M}{\mathcal{M}}
\newcommand{\G}{\mathcal{G}}
\newcommand{\A}{\mathcal{A}}
\newcommand{\U}{\mathcal{U}}
\newcommand{\D}{\mathcal{D}}
\newcommand{\ssubset}{\subset\joinrel\subset}
\DeclarePairedDelimiter\floor{\lfloor}{\rfloor}
\begin{document}

\begin{frontmatter}





\title{A Learning-based Domain Decomposition Method}

\author[label1]{Rui Wu}
\author[label3]{Nikola Kovachki}
\author[label1]{Burigede Liu\corref{cor1}}
\ead{bl377@cam.ac.uk}
\cortext[cor1]{Corresponding Author}

\affiliation[label1]{organization={Department of Engineering, University of Cambridge},
            addressline={Trumpington Street},
            city={Cambridge},
            postcode={CB2 1PZ},
            country={UK}}

\affiliation[label3]{organization={NVIDIA Corporation},
            addressline={2788 San Tomas Expy},
            city={Santa Clara},
            postcode={CA 95051},
            state={California},
            country={USA}}

\begin{abstract}
Recent developments in mechanical, aerospace, and structural engineering have driven a growing need for efficient ways to model and analyze structures at much larger and more complex scales than before. While established numerical methods like the Finite Element Method remain reliable, they often struggle with computational cost and scalability when dealing with large and geometrically intricate problems. In recent years, neural network-based methods have shown promise because of their ability to efficiently approximate nonlinear mappings. However, most existing neural approaches are still largely limited to simple domains, which makes it difficult to apply to real-world PDEs involving complex geometries. In this paper, we propose a learning-based domain decomposition method (L-DDM) that addresses this gap. Our approach uses a single, pre-trained neural operator—originally trained on simple domains—as a surrogate model within a domain decomposition scheme, allowing us to tackle large and complicated domains efficiently. We provide a general theoretical result on the existence of neural operator approximations in the context of domain decomposition solution of abstract PDEs. We then demonstrate our method by accurately approximating solutions to elliptic PDEs with discontinuous microstructures in complex geometries, using a physics-pretrained neural operator (PPNO). Our results show that this approach not only outperforms current state-of-the-art methods on these challenging problems, but also offers resolution-invariance and strong generalization to microstructural patterns unseen during training.

\end{abstract}



\begin{keyword}
Machine Learning \sep Neural Operators \sep Domain Decomposition Method \sep Partial Differential Equations


\end{keyword}

\end{frontmatter}


\section{Introduction}
The design and analysis of complex systems in mechanical, aerospace, and structural engineering \cite{bullo2019geometric, leng2024multidisciplinary, bauchau2009structural, umer2025approximate, stepinski2013advanced} are fundamentally governed by partial differential equations (PDEs), with a persistent drive to model systems at unprecedentedly larger and more complicated scales. Elliptic PDEs \cite{gilbarg1977elliptic}, in particular, are ubiquitous for modeling steady-state behaviors. For decades, the Finite Element Method (FEM) \cite{zienkiewicz2005finite, huebner2001finite} has been the gold standard for solving these equations. However, its practical utility is often hampered by prohibitive computational costs and poor scalability, a bottleneck that severely limits the fidelity of large-scale simulations with intricate geometries.

Over the past decade, there has been enormous interest in developing neural network-based approaches for solving PDEs. Two main streams of techniques have emerged. The first approach is Physics-Informed Neural Networks (PINNs) \cite{raissi2019physics}. At the heart of the PINN approach is the use of a neural network to approximate the unknown solution field, instead of a polynomial approximation employed by classical FEMs. This has opened up a line of research combining neural network basis functions with ideas from finite elements, including the Deep Ritz Method  \cite{yu2018deep} and variational PINN \cite{kharazmi2019variational}. However, training PINNs remains challenging, and as a solver, PINNs require retraining whenever there is a change in boundary conditions or PDE parameterization, which is similar to classical solvers.

The second stream of research focuses on Neural Operators (NOs) \cite{kovachki2023neural}. The idea of NOs is to directly approximate the solution operator for a parameterized PDE in function spaces, making it possible to solve not only one but a class of PDEs once trained. Numerous efforts have been developed, including DeepONet \cite{lu2019deeponet}, Fourier Neural Operator (FNO) \cite{li2020fourier}, Graph Kernel Network (GKN) \cite{li2020neural}, and more recent architectures such as Physics-Informed Neural Operator (PINO) \cite{li2024physics}, Geometry-Informed Neural Operator (GINO) \cite{li2023geometry}, Transformer Operator \cite{calvello2024continuum}, and Neural Interpretable PDEs (NIPS) for simultaneous solving and physics discovery via attention mechanisms  \cite{liu2025neural}. A key limitation of NOs is generalizing the trained operator to arbitrary complex domains. It is conceivable to achieve this by constructing an operator that maps the domain directly to the solution; however, this might be computationally prohibitive or require large amounts of data that is not usually available. The majority of work done so far has focused on training NOs on a fixed domain with varying PDE parameterizations and boundary conditions \cite{goswami2023physics, li2023fourier}.

To address this challenge, recent work has explored hybrid approaches that combine machine learning (ML) with established numerical techniques. A particularly promising direction integrates neural networks with domain decomposition methods (DDMs) \cite{schwarz1870ueber, lions1988schwarz, smith1997domain, toselli2004domain}---a class of algorithms first introduced by Schwarz in 1870 for solving elliptic boundary value problems on composite domains. The fundamental strength of DDMs lies in their divide-and-conquer strategy, where the global domain is partitioned into smaller, computationally tractable subdomains that are solved iteratively with coordinated interface conditions. This framework has evolved into powerful computational strategies, such as the classical Additive Schwarz Algorithm (ASA) \cite{courant1987additive}, with its inherent parallelism and modular structure naturally lending itself to combination with ML to address large-scale problems.

The integration of ML with DDMs has developed along two primary paradigms \cite{klawonn2024machine, heinlein2021combining}. In the first paradigm, Jagtap et al. \cite{jagtap2020conservative} proposed conservative PINN (cPINN). The key difference compared to the classical PINN approach is that they assigned different neural networks to each subdomain and enforce continuity across the interfaces of subdomains. A global optimization (training) is then performed via gradient descent to optimize the solutions of all subdomains simultaneously in order to find the global solution. Jagtap et al. further extended this idea \cite{jagtap2020extended} and proposed a space-time decomposition method called extended PINN (xPINN). Other more sophisticated architectures employing multiple specialized PINNs have also emerged, including GatedPINN \cite{stiller2020large}, MoE-PINN \cite{bischof2022mixture}, and APINN \cite{hu2023augmented}, which operate on an ensemble-of-experts principle. Additionally, architectures like DPINN \cite{dwivedi2019distributed} and PECANN \cite{basir2023generalized} incorporate interface conditions explicitly into the loss function to enforce solution consistency across subdomains. FBPINN \cite{moseley2023finite, dolean2024multilevel} has drawn inspiration from finite element methods to develop compact basis function representations. A key limitation in this paradigm lies in its reliance on global optimization during training, which tightly couples all subdomains and can lead to significant communication overhead and scalability challenges in distributed computing environments.

These limitations motivate a shift toward the second paradigm: using machine learning to enhance traditional DDM solvers. In this approach, the computational domain is first decomposed into subdomains, after which an ML model acts as a local solver for each region. The solutions are then iteratively refined through interface updates, mirroring classical DDM frameworks. Notable implementations include D3M \cite{li2019d3m}, which employs Deep Ritz networks as subdomain solvers, and DeepDDM \cite{li2020deep}, utilizing PINNs at each subdomain level. However, current methods face two critical challenges: (i) the necessity to train separate networks for different subdomains constrains model capacity due to computational resource limitations, and (ii) repeated retraining during iterative updates severely diminishes computational efficiency. These issues stem from the prevailing per-instance learning paradigm, where networks remain specialized to specific geometric configurations, boundary conditions, or decomposition strategies. Such specialization squanders the inherent generalizability of deep learning architectures and is particularly inefficient for problems involving numerous subdomains or complex geometries. This observation naturally leads to an alternative strategy: employing a single, pretrained neural network as a universal solver across all subdomains. Neural operators emerge as particularly suited for this role, as their ability to learn mappings between function spaces enables generalization across diverse parametric conditions, boundary values, and domain configurations—--precisely the flexibility required for robust DDM implementations.

Building upon these insights, this paper develops a learning-based domain decomposition method (L-DDM) where we employ pre-trained neural operators acting as surrogate models for the local subdomains and employ the classical iterative solver to reconstruct the solution for the entire domain. This algorithm leverages the function-space learning capabilities of neural operators and the parallel efficiency of the overlapping Schwarz method, creating a robust approach that follows a principled two-phase framework: 
\begin{enumerate}
    \item in the offline phase, the neural operator is pretrained on synthetic data to learn the solution manifold of the parametrized PDE, acquiring the ability to generalize to unseen boundary conditions and material microstructures; 
    \item in the online phase, the pretrained operator serves as a universal local solver within the Additive Schwarz Algorithm, where the computational domain is decomposed into overlapping subdomains and solutions are iteratively refined through interface updates.
\end{enumerate}

Specifically, we propose the physics-pretrained neural operator (PPNO), a neural operator architecture that learns the solution mappings for PDEs from both parameterization and boundary conditions while maintaining physical consistency.

This combined approach is computationally efficient, eliminating the need for retraining each new problem configuration. The PPNO's unique architecture preserves physical constraints while ensuring the strong generalization necessary for complex, large-scale problems. These capabilities are quantitatively validated in comprehensive benchmarks, where our algorithm accurately handles both in- and out-of-distribution microstructures under random boundary conditions, achieving a previously elusive combination of accuracy and robustness on spatially varying geometries.


The remainder fo this paper is organized as follows. In section 2, we first develop a general existence theorem for L-DDM on a general abstract PDE. We then demonstrate our approach on a elliptic PDE and provide the detailed numerical algorithm. In section 3, we propose the PPNO, an architecture specifically suitable for constrained physics and various parameterizations. Section 4 first examines the standalone performance of the PPNO, followed by large-scale numerical experiments demonstrating the algorithm's effectiveness across complex domains with heterogeneous material properties. Finally, section 5 discusses broader implications, limitations, and promising directions for future research in operator learning-enhanced domain decomposition methods.

\section{Problem Formulation and Theory}
\subsection{Abstract Dirichlet Problem}
We will first formulate our methodology on an abstract PDE problem, carefully outlining the conditions needed for the existence of an arbitrarily good neural operator approximation.

Let $\Omega \subset \mathbb{R}^d$ be a bounded, Lipschitz domain. We consider the abstract  Dirichlet problem of finding $u \in \U = \U (\Omega)$ such that
\begin{align}
\label{eq:abstract_problem}
    \begin{split}
    \mathcal{P}(a,u) &= 0, \quad \text{in } \Omega, \\
    u&= g, \quad \text{in } \partial \Omega
    \end{split}
\end{align}
for some $a \in \A$ and $g \in \U$ where $\mathcal{P}$ is a partial differential operator. We assume $\A$ and $\U$ are vector (or matrix) valued Banach function spaces on $\Omega$. Furthermore, we assume that $C^\infty (\bar{\Omega}) \subset \U$ and that there exists a bounded linear operator $T$ on $\U$ such that $T(g) = g|_{\partial \Omega}$ for every $g \in \U \cap C(\bar{\Omega})$, giving the appropriate interpretation to \eqref{eq:abstract_problem}. This assumption will be satisfied, for example, when $\U$ is a space of continuous or continuously differentiable functions or a Sobolev space with enough regularity to admit the existence of a trace operator \cite{adams2003sobolev}. We will further assume that \eqref{eq:abstract_problem} has a unique solution $u = u(a,g) \in \U$ for each pair $(a,g) \in \A \times \U$ and that the mapping $(a,g) \mapsto u$ is continuous. 

\subsection{Alternating Schwarz Method}
\label{subsec:alternating_schwarz}
The alternating Schwarz method constructs a sequence of approximate solutions to the global problem \eqref{eq:abstract_problem} by iteratively and sequentially solving a series of local problems on overlapping subdomains. For clarity of exposition, we will consider only two partitions of the domain $\Omega$, though the method generalizes to an arbitrary number.



Let $\Omega_1$, $\Omega_2$ be two Lipschitz domains which form an overlapping partition of $\Omega$, in particular, $\Omega_1 \cup \Omega_2 = \Omega$. We denote the intersection as $I = \Omega_1 \cap \Omega_2 \neq \emptyset$ and require it to have a positive Lebesgue measure. The non-overlapping regions are denoted $D_1 = \Omega_1 \setminus  \Omega_2$ and $D_2 = \Omega_2 \setminus \Omega_1$, which are also assumed to have positive Lebesgue measure. Finally we define part of the interior boundary sets $\Gamma_1 = \partial \Omega_1 \setminus \partial \Omega$ and $\Gamma_2 = \partial \Omega_2 \setminus \partial \Omega$. An example domain configuration is illustrated in Figure~\ref{fig:domain}.

\begin{figure}[h!]
    \centering
    \includegraphics[width=0.9\linewidth]{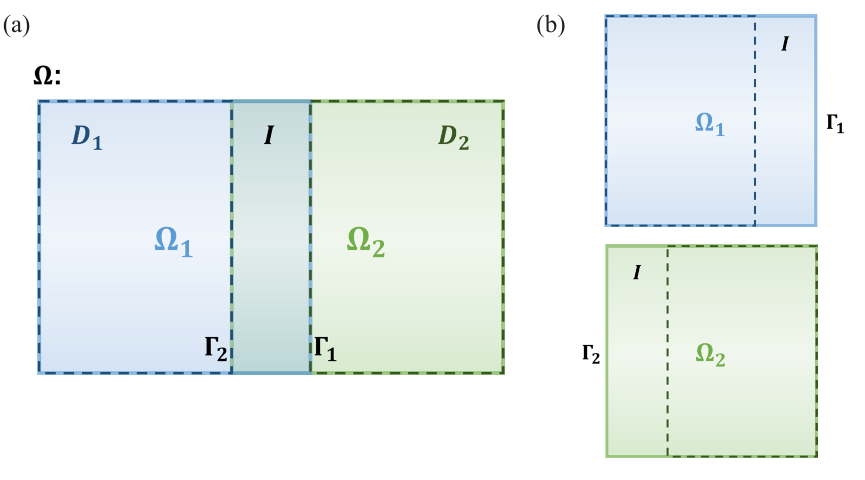}
    \caption{Schematic illustration of the overlapping domain decomposition. The global domain $\Omega$ is partitioned into two overlapping subdomains $\Omega_1$ and $\Omega_2$ such that $\Omega = \Omega_1 \cup \Omega_2$ and $I = \Omega_1 \cap \Omega_2$. The non-overlapping regions $D_1$ and $D_2$ are denoted using dash lines. The interior boundary portion $\Gamma_1 = \partial \Omega_1 \setminus \partial \Omega$ and $\Gamma_2 = \partial \Omega_2 \setminus \partial \Omega$ is also highlighted.}
    \label{fig:domain}
\end{figure}

We now consider the following pair of local problems
\begin{align}
\label{eq:abstract_problem_local_1}
    \begin{split}
    \mathcal{P}(a|_{\Omega_1},v) &= 0, \quad \text{in } \Omega_1, \\
    v &= g, \quad \text{in } \partial \Omega_1
    \end{split}
\end{align}
and
\begin{align}
\label{eq:abstract_problem_local_2}
    \begin{split}
    \mathcal{P}(a|_{\Omega_2},w) &= 0, \quad \text{in } \Omega_2, \\
    w &= g, \quad \text{in } \partial \Omega_2 \setminus \Gamma_2 \\
    w &= v, \quad \text{in } \Gamma_2
    \end{split}
\end{align}
and assume both have unique solutions $v \in \U (\Omega_1)$ and $w \in \U (\Omega_2)$. It is easy to see that \eqref{eq:abstract_problem_local_2} has the same structure as \eqref{eq:abstract_problem_local_1} since $v = g$ in $\partial I$. Hence unique solutions to \eqref{eq:abstract_problem_local_1} and \eqref{eq:abstract_problem_local_2} are guaranteed as long as \eqref{eq:abstract_problem} has a unique solution on any arbitrary Lipschitz domain. The idea is that $v$ is first solved in $\Omega_1$ with an \say{arbitrary} boundary condition in $\partial \Omega_1 \setminus \partial\Omega$ which we here simply take from the extension of $g$ on the whole domain. Then $w$ is solved in $\Omega_2$ by taking the \say{unknown} part of its boundary condition from $v$. An approximate solution on the whole domain is found by gluing $v$ and $w$ with a partition of unity as described below. We will abuse notation and continue to write $v$ and $w$ for their zero extensions to the whole of $\Omega$.

We will now define a partition of unity in $\bar{\Omega}$ which we will use to glue our local solutions.
Let $\varphi \in C^\infty(\bar{\Omega})$ be such that $\varphi \equiv 1$ on $\bar{D}_1$ and $\varphi \equiv 0$ on $\bar{D}_2$. Let $\phi \in C^\infty (\bar{\Omega})$ be such that $\phi \equiv 1$ on $\bar{D}_2$ and $\phi \equiv 0$ on $\bar{D}_1$. Furthermore we enforce that $\varphi(x) + \phi(x) = 1$ for all $x \in \bar{I}$. It is clear that $\varphi(x) + \phi(x) = 1$ for all $x \in \bar{\Omega}$ hence $\{\varphi, \phi\}$ is a partition of unity. In section~\ref{sec:numerics}, we give an explicit example of such a partition which is $C$-regular and note that $C^\infty$-regularity can be obtained by mollification. We can now define the base operator $\G$ of the alternating Schwarz method as $\G(a,g) = v \varphi + w \phi$. We make the assumption that this construction satisfies $\G(a,g) \in \U$ and remark that since our partition of unity is smooth and $v$ and $w$ agree on $\Gamma$, the assumption will be satisfied for $\U$ a Sobolev space or a space of continuous or continuously differentiable functions. We will use the following notation to denote the $n$-fold composition of $\G$ with itself, that is, for any $n \in \N$
\begin{equation}\label{eq:G-composition}
\G^{(n)}(a,g) = \underbrace{\G \big( a, \G(a, \dots ) \big )}_{n \text{ times}}
\end{equation}
so that $\G^{(1)}(a,g) = \G(a,g)$, $\G^{(2)}(a,g) = \G \big (a, \G(a, g) \big)$ and so on. We can now define the alternating Schwarz method.
\begin{definition}
    We say that the abstract problem \eqref{eq:abstract_problem} satisfies the \textit{alternating Schwarz method} if for any pair $(a,g) \in \A \times \U$ and any $\epsilon > 0$, there exists $N \in \N$ such that
    \begin{equation}
        \|\G^{(n)}(a,g) - u(a,g) \|_\U < \epsilon, \qquad \forall \: n \geq N.
    \end{equation}
\end{definition}

In the next section, we will approximate $\G$ with a neural operator over compact sets of $\A \times \U$. To ensure that our approximation will be close to the true solution of \eqref{eq:abstract_problem}, we will also need to control the sequences generated by the alternating Schwarz method over the set of parameters of the PDE. To that end, we make the following definition.
\begin{definition}
    We say that a set $K \subset \A \times \U$ has the \textit{uniform alternating Schwarz property} if for any $\epsilon > 0$, there exists $N \in \N$ such that
    \begin{equation}
        \sup_{(a,g) \in K} \|\G^{(n)}(a,g) - u(a,g) \|_\U < \epsilon, \qquad \forall \: n \geq N.
    \end{equation}
\end{definition}
We will now show that, if $\G^{(n)}$ is uniformly locally Lipschitz, then any compact set has the uniform alternating Schwarz property. The proof is given in \ref{sec:approximation_of_the_schwarz_method}.

\begin{lemma}
    \label{lemma:uniform_locally_lip}
    Let $K \subset \A \times \U$ be compact. Suppose that there exists a constant $L > 0$ such that, for any $n \in \N$ and $(a_1,g_1), (a_2,g_2) \in K$,
    \begin{equation}
        \|\G^{(n)}(a_1,g_1) - \G^{(n)}(a_2,g_2)\|_{\U} < L \big ( \|a_1 - a_2\|_\A + \|g_1 - g_2\|_\U \big ).
    \end{equation}
    Furthermore, assume that the map $(a,g) \mapsto u$ defined by \eqref{eq:abstract_problem} is Lipschitz on $K$. Then $K$ has the uniform additive Schwarz property.
\end{lemma}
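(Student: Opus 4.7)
The plan is a standard equicontinuity-plus-pointwise-convergence argument, in the spirit of Arzel\`a--Ascoli: reduce the uniform claim on $K$ to pointwise convergence at finitely many centers via a covering argument. The uniform (in $n$) Lipschitz hypothesis on $\G^{(n)}$ plays the role of equicontinuity, which is precisely what allows one to upgrade pointwise control to uniform control on compact sets.

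First I would fix $\epsilon > 0$ and choose a radius $\delta > 0$ (to be determined below). By compactness of $K$ in $\A \times \U$, cover $K$ by finitely many open balls $B_1, \ldots, B_m$ of radius $\delta$ with centers $(a_i, g_i) \in K$. For any $(a, g) \in K$, select an index $i$ with $\|a - a_i\|_\A + \|g - g_i\|_\U < \delta$, and split via the triangle inequality as
\begin{equation*}
    \|\G^{(n)}(a,g) - u(a,g)\|_\U \leq \|\G^{(n)}(a,g) - \G^{(n)}(a_i,g_i)\|_\U + \|\G^{(n)}(a_i,g_i) - u(a_i,g_i)\|_\U + \|u(a_i,g_i) - u(a,g)\|_\U .
\end{equation*}
The first term is bounded by $L\delta$ by hypothesis, crucially independent of $n$; the third term is bounded by $L_u \delta$, where $L_u$ denotes the Lipschitz constant of the solution map $(a,g) \mapsto u$ on $K$. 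Choose $\delta$ so that $(L + L_u)\delta < \epsilon/2$.

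Next I would handle the middle term at each of the finitely many centers by pointwise convergence of the alternating Schwarz iterates: for each $i$ there exists $N_i \in \N$ with $\|\G^{(n)}(a_i, g_i) - u(a_i, g_i)\|_\U < \epsilon/2$ for all $n \geq N_i$. Setting $N = \max_{i \leq m} N_i$, the middle-term bound holds simultaneously for all $i$, and hence uniformly over $(a,g) \in K$, yielding $\sup_{(a,g) \in K} \|\G^{(n)}(a,g) - u(a,g)\|_\U < \epsilon$ for all $n \geq N$, which is exactly the desired uniform Schwarz property.

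There is no substantive obstacle; the argument is the familiar one by which an equicontinuous family converging pointwise converges uniformly on compacta, and the independence of $L$ from $n$ is exactly what makes the first term tractable without any additional work. The point most worth flagging is that the pointwise input needed for the middle term is not itself among the hypotheses listed in the lemma and must be supplied by the standing assumption that problem~\eqref{eq:abstract_problem} satisfies the alternating Schwarz method; without it, no bound on the middle term is available. I also read the conclusion's \say{uniform additive Schwarz property} as a slip for the \say{uniform alternating Schwarz property}, which is the only such notion defined in the excerpt.
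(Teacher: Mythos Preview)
Your proposal is correct and essentially identical to the paper's proof: both cover $K$ by a finite $\delta$-net, split via the triangle inequality into the same three pieces, control the outer two by the uniform Lipschitz constants $L$ and $L_u$ (the paper's $\tilde{L}$), and handle the middle term at the finitely many centers by the standing pointwise alternating Schwarz assumption before taking $N = \max_i N_i$. Your remarks that the pointwise convergence is a standing assumption rather than a stated hypothesis, and that \say{uniform additive} should read \say{uniform alternating}, are both apt.
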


\subsection{Neural Operator Approximation}
\label{sec:neural_op_approx}

We will now define a neural operator approximation to the alternating Schwarz method defined in the previous section and prove that our approximation can be arbitrarily good. Let $\D$ be either $\Omega_1$ or $\Omega_2$ and consider the problem 
\begin{align}
\label{eq:abstract_problem_local_D}
    \begin{split}
    \mathcal{P}(a|_{\D},z) &= 0, \quad \text{in } \D, \\
    z &= g, \quad \text{in } \partial \D
    \end{split}
\end{align}
for any $a \in \A$ and $g \in \U$. We denote by $\G_1 : \A \times \U \to \U(\Omega_1)$ and $\G_2 : \A \times \U \to \U(\Omega_2)$ the solution operators of \eqref{eq:abstract_problem_local_D} mapping $(a,g) \mapsto z$. We will consider their outputs as functions on $\Omega$ by zero extension. It is easy to see that
\begin{equation}
    \big ( \G_1 (a, g) \varphi + g \phi \big )(x) = \begin{cases}
    g(x), & x \in \Omega_2 \setminus \Gamma, \\
    \G_1 (a, g)(x), & x \in \Gamma. \end{cases}
\end{equation}
Hence we can write the base operator of the alternating Schwarz method as
\begin{equation}
\G(a,g) = \G_1 \big ( a,g \big ) \varphi + \G_2 \big (a, \G_1 (a,g) \varphi + g \phi \big) \phi.
\end{equation}
We now consider two neural operators $\hat{\G}_1 : \A \times \U \to \U(\Omega_1)$ and $\hat{\G}_2 : \A \times \U \to \U(\Omega_2)$
and define an \textit{alternating Schwarz} neural operator as 
\begin{equation}
\hat{\G}(a,g) = \hat{\G}_1 \big ( a,g \big ) \varphi + \hat{\G}_2 \big (a, \hat{\G}_1 (a,g) \varphi + g \phi \big) \phi.
\end{equation}
We can now state and prove our main result under the following assumption.
\begin{assumption}
    \label{asump:continuity}
    The operators $\G_j: \A \times \U \to \U(\Omega_j)$ are continuous and the mappings $\G_j(a,\cdot) : \U \to \U(\Omega_j)$ are locally Lipschitz continuous for every $a \in \A$ and $j=1,2$.  
\end{assumption}

\begin{theorem}
    \label{thm:main_approximation}
    Let $\U = C^s (\bar{\Omega})$ for some $s \in \N_0$. Suppose problem \eqref{eq:abstract_problem} satisfies the alternating Schwarz method and that assumption~\ref{asump:continuity} holds. Then for any compact set $K \subset \A \times \U$ satisfying the uniform alternating Schwarz property and $\epsilon > 0$, there exists a number $n \in \N$ and an alternating Schwarz neural operator $\hat{\G} : \A \times C^s (\bar{\Omega}) \to C^s (\bar{\Omega})$ such that
    \begin{equation}
        \sup_{(a,g) \in K} \|\hat{\G}^{(n)}(a,g) - u(a,g) \|_{C^s} < \epsilon.
    \end{equation}
\end{theorem}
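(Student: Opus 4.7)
The strategy is to combine the uniform alternating Schwarz property with a standard universal approximation theorem for neural operators (e.g. Kovachki et al.), and then carefully propagate the per-iteration approximation error through the $n$-fold composition using Assumption~\ref{asump:continuity}.

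First, I would fix the iteration count. By the uniform alternating Schwarz property of $K$, choose $n \in \N$ large enough so that
\begin{equation}
\sup_{(a,g) \in K} \|\G^{(n)}(a,g) - u(a,g)\|_{C^s} < \epsilon/2.
\end{equation}
With $n$ fixed, it then suffices to construct neural operators $\hat{\G}_1, \hat{\G}_2$ such that the resulting $\hat{\G}$ satisfies $\sup_{(a,g) \in K} \|\hat{\G}^{(n)}(a,g) - \G^{(n)}(a,g)\|_{C^s} < \epsilon/2$.

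Second, I would identify the compact sets on which the neural operators must approximate. Define inductively $K_0 = K$ and $K_{j+1} = \{(a,\, \G(a,g)) : (a,g) \in K_j\}\subset \A \times C^s(\bar\Omega)$. Since $\G_1$ and $\G_2$ are continuous into $C^s$ (Assumption~\ref{asump:continuity}), multiplication by the smooth partition of unity is continuous on $C^s$, and composition preserves compactness, each $K_j$ is compact. Taking the union $\tilde K = \bigcup_{j=0}^{n-1} K_j$ yields a single compact subset of $\A \times C^s(\bar\Omega)$ on which all intermediate states of the true Schwarz iteration live. To absorb the perturbation introduced by $\hat{\G}$, I would enlarge $\tilde K$ to a compact neighborhood $\tilde K^\eta$ (e.g.\ the closure of an $\eta$-tube) so that, as long as approximation errors stay below $\eta$, every intermediate iterate of $\hat{\G}$ also lies in $\tilde K^\eta$. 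Applying the neural operator universal approximation theorem on the compact sets $\tilde K^\eta$ and $\{a : (a,g)\in \tilde K^\eta\}\times \{\G_1(a,g)\varphi + g\phi : (a,g)\in \tilde K^\eta\}$, I obtain $\hat{\G}_1,\hat{\G}_2$ with $\sup \|\hat{\G}_j - \G_j\|_{C^s} < \delta$ for any preselected $\delta$.

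Third, I would propagate error through the composition. Multiplication by $\varphi$ or $\phi$ is bounded on $C^s$, and by Assumption~\ref{asump:continuity} the map $\G_2(a,\cdot)$ is locally Lipschitz, so on $\tilde K^\eta$ there is a uniform Lipschitz constant $L$ (continuity of $a\mapsto$ the local Lipschitz constant combined with compactness of the $a$-projection of $\tilde K^\eta$). A one-step bound of the form
\begin{equation}
\|\hat{\G}(a,g) - \G(a,g)\|_{C^s} \le C_1 \delta + C_2 L \,\|\hat{g} - g\|_{C^s}
\end{equation}
then follows by adding and subtracting $\G_2(a,\hat{\G}_1(a,g)\varphi + g\phi)$. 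Iterating this bound $n$ times yields an estimate of the form $\|\hat{\G}^{(n)}(a,g) - \G^{(n)}(a,g)\|_{C^s} \le C(n,L)\,\delta$, uniformly over $K$. Choosing $\delta$ so that $C(n,L)\delta < \min(\epsilon/2, \eta)$ (the $\eta$ condition keeping all intermediate iterates inside $\tilde K^\eta$) completes the proof via a triangle inequality.

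The main obstacle I anticipate is the bookkeeping in the third step: one must simultaneously ensure (i) that the perturbed trajectory stays in the compact set where universal approximation bounds hold and (ii) that the Lipschitz constants controlling error growth are uniform across the trajectory. Both difficulties are resolved by the compact enlargement $\tilde K^\eta$ and by the local-Lipschitz assumption, but the argument must be set up as a self-consistent choice of $\delta$ and $\eta$ rather than chosen independently. Everything else—selecting $n$, invoking neural operator universal approximation, and bounding each composition step—is standard once this consistency is in place.
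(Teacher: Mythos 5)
Your overall architecture matches the paper's: fix $n$ by the uniform alternating Schwarz property, reduce to approximating $\G^{(n)}$ by $\hat{\G}^{(n)}$ on $K$, and then control the error propagation through the composition using the local Lipschitz structure of $\G_2(a,\cdot)$ and the boundedness of multiplication by the partition of unity. However, there is a genuine gap at the pivotal step, and it is precisely the step the paper devotes its entire \ref{sec:compisiton_of_operators} to resolving.

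The gap is your ``compact enlargement $\tilde K^\eta$.'' In the infinite-dimensional space $C^s(\bar{\Omega})$, the closure of an $\eta$-tube around a compact set is \emph{not} compact: it contains a closed ball of radius $\eta$ around any point of $\tilde K$, and by Riesz's lemma closed balls in an infinite-dimensional normed space are never compact. Since the universal approximation theorem for neural operators only provides uniform error bounds over compact sets, you cannot invoke it on $\tilde K^\eta$, and therefore you have no guarantee that $\hat{\G}_j$ approximates $\G_j$ at the perturbed iterates $\hat{\G}^{(l)}(a,g)$, which is exactly where you need the bound in your induction. This is not bookkeeping; it is the central obstruction. The paper circumvents it by first replacing $\G$ with a mollified operator $\M_{\delta_0}\G$ mapping $C^s(\bar{\Omega})$ into $C^{s+1}(\bar{\Omega})$ (Theorem~\ref{thm:composition_cs_cs}), so that norm-bounded sets in the smoother space become genuinely compact in $C^s(\bar{\Omega})$ via the compact embedding $C^{s+1} \ssubset C^s$; Lemma~\ref{lemma:compose_r_greater_s} then builds the nested compact sets $K_l = \{\|a\|_{C^{s+1}} \le 2M_l\}$ in which the approximate trajectory provably remains. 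This is also why the paper explicitly remarks that the argument does not immediately extend to Sobolev spaces: the whole construction hinges on gaining compactness from extra smoothness, which your proposal does not supply. To repair your proof you would need either this mollification-and-smoothing device or some other mechanism that keeps the perturbed iterates inside a genuinely compact set; the $\eta$-tube alone cannot do it.
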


The proof of the above result is given in \ref{sec:approximation_of_the_schwarz_method}. It relies on an approximation result about the composition of a neural operator with its own output. In particular, we show that for Lipchitz operators between spaces of continuously differentiable functions, a neural operator approximation can remain arbitrarily close after $n > 1$ compositions with itself. This result is of independent interest as it applies generally to all auto-regressive models which aim to approximate function-valued time series as is commonly done, for example, in weather forecasting \cite{pathak2022fourcastnet, lam2023learning}. Precise statements and proofs are given in \ref{sec:compisiton_of_operators}. The proof relies on the uniform boundedness property of the mollification operator defined in \cite[Appendix A]{lanthaler2023nonlocality} when acting between spaces of continuously differentiable functions. This uniform boundedness, however, does not hold when the mollification operator acts between Sobolev spaces hence the proof does not immediately generalize to this setting. We leave extension of the current results to operators acting between Sobolev spaces as future work.

\subsection{Second Order Elliptic PDE}
In the remainder of this paper, we illustrate the abstract framework by considering a linear, second-order elliptic boundary-value problem defined as
\begin{align}
\label{eq:second_order_elliptic_pde}
\begin{split}
    \mathcal{P}(a, u) = -\nabla \cdot (a \nabla u)  &= 0, \quad x \in \Omega, \\
    u &= g, \quad x \in \partial \Omega,
\end{split}
\end{align}
where \(a \in  L^{\infty} (\Omega; \mathbb{R}^{d \times d}_{\text{sym}, \succ 0})\) is a positive definite matrix field, $g \in H^{1/2}(\Omega)$ is a given boundary condition. This PDE is crucial in various scientific and engineering fields due to its ability to model essential physical phenomena. It finds applications in mechanics for understanding material deformation, in materials science for studying diffusion processes, in fluid flow for modeling potential flow, and in heat conduction for analyzing temperature distribution. 

Standard elliptic theory dictates that Eq.~\eqref{eq:second_order_elliptic_pde} has a unique solution $u \in H^{1}(\Omega)$ \cite{gilbarg2001elliptic}. If $d = 1$ then $H^1 (\Omega) \subset C(\bar{\Omega})$ and, if we further assume that $g \in C(\bar{\Omega})$, then the operator of interest maps between spaces for which Theorem~\ref{thm:main_approximation} holds. However, for $d > 1$, the Sobolev embedding no longer holds, hence a stronger assumption on the diffusion coefficient $a$ is needed to guarantee the continuity of $u$. For example, H{\"o}lder continuity of $a$ implies H{\"o}lder continuity of $u$ in $\Omega$ and continuity in $\bar{\Omega}$, see \cite[Chapter 6]{gilbarg2001elliptic}. In order to apply the theory established in the previous setting to this case, one needs to (i) establish continuity of the map $a \mapsto u$ for input and output space $C(\bar{\Omega})$, and (ii) show that the problem satisfies the alternating Schwarz method on $C(\bar{\Omega})$. While we believe that this is an interesting theoretical direction in the analysis of elliptic PDE(s), it is outside the scope of the current work.

\subsection{A Learning-based Domain Decomposition}
We now proceed to develop the learning-based domain decomposition method for Eq.~\eqref{eq:second_order_elliptic_pde}. The formulation of the subdomain problems is similar to those described in subsection~\ref{subsec:alternating_schwarz}, except that we now consider $M \in \N$ subdomains $\Omega_j$ that are congruent to a fixed domain $\mathcal{D}$. We consider a partition of unity $\{\phi_j(x)\}_{j=1}^M$, and the global solution $u(x)$ can be reconstructed from the local solutions $u_j(x)$ by:
\begin{equation}
u(x) = \sum_{j=1}^M \phi_j(x) u_j(x).
\end{equation}

In Theorem~\ref{thm:main_approximation}, we have established the existence of a neural operator approximation for the alternating Schwarz method. However, in practice, the alternating Schwarz method faces challenges in parallelization, which is essential for large, complicated domains. In the rest of the paper, we shall develop the learning-based domain decomposition method based on the additive Schwarz method. The fundamental distinction between these variants is illustrated in Figure~\ref{fig:schwarz_methods}. The alternating method is inherently sequential, as computing the solution on subdomain $\Omega_j$ at iteration $n+1$ requires the already updated solution from subdomain $\Omega_{j-1}$ within the same iteration. In contrast, the additive method computes the solutions for all subdomains at iteration $n+1$ concurrently, with each local solver depending only on the prediction of global solution from the previous iteration, $\hat{u}^{(n)}$. We consider the extended $g \in H^1(\Omega)$ (c.f. section~\ref{sec:extension_operator} for details), and let $\mathcal{G}_j$ denote the solution operator for each subdomain $\Omega_j$, which we extend to the full domain via zero extension. The base operator of the additive Schwarz method can be written as
\begin{equation}
\mathcal{G}(a,g) = \sum_{j=1}^{M} \mathcal{G}_j(a,g) \phi_j,
\end{equation}
and the $n$-fold composition of $\mathcal{G}$ with itself follows the definition given in \eqref{eq:G-composition}.
Our approximation theory focuses on the alternative method because it is the most widely analyzed and results exist for different PDEs and variations on the methods; see, for example, the seminal works \cite{lions1988onthe,lions1989onthe,lions1990onthe}. Similar results to our own can also be derived for the additive method. We do not pursue this in the current work as our goal is not an exhaustive analysis of different Schwarz-type methods, but rather a theoretical and practical demonstration of their potential use within deep learning algorithms.  

\begin{figure}[h!]
    \centering
    \includegraphics[width=1.0\linewidth]{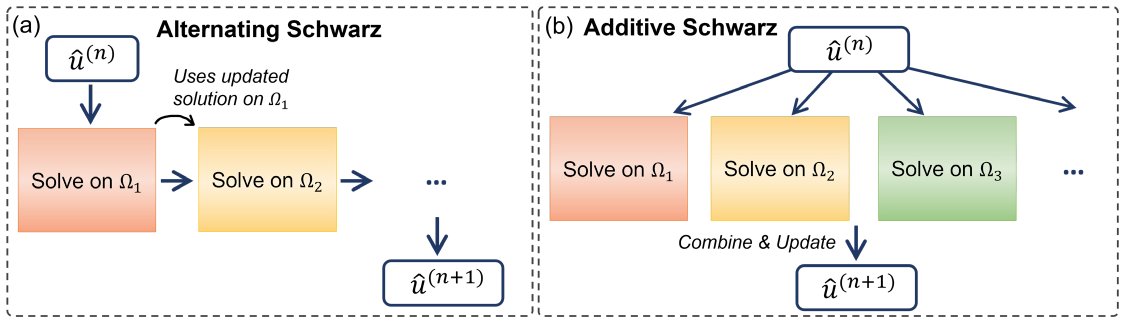}
    \caption{A schematic comparison of Schwarz methods for iteration $n \to n+1$. (a) Alternating Schwarz (Sequential): The update for subdomain $\Omega_j$ depends on the new solution from $\Omega_{j-1}$. (b) Additive Schwarz (Parallel): All subdomain problems are solved independently using the global solution from the previous iteration, $u^{(n)}$. Our algorithm implements the additive variant for computational efficiency.}
    \label{fig:schwarz_methods}
\end{figure}

We now proceed to describe the learning-based domain decomposition algorithm in detail. Note that, under the condition that each $\Omega_j$ is congruent to the fixed domain $\mathcal{D}$, the solution operator $\mathcal{G}_j$ is identical to $\mathcal{G}_k$ for all $j, k \in \{1, \dots, M\}$. We shall use $\mathcal{G}_\mathcal{D}$ to denote this common solution operator. Our goal is to first construct a neural operator approximation $\hat{\mathcal{G}}_\mathcal{D}$ for the local PDE and then apply it as a surrogate in the additive Schwarz algorithm.

In Algorithm~\ref{alg:domain_decomposition}, we detail our learning-based domain decomposition algorithm. We achieve this by first building the neural operator approximation of the solution operator for the local PDEs, $\hat{\mathcal{G}}_\mathcal{D}$. This is done by pretraining a neural operator with arbitrary functions $a_\mathcal{D} \in L^{\infty}(\mathcal{D}; \mathbb{R}^{d \times d}_{\text{sym}, \succ 0})$ and boundary conditions specified via extensions $g_\mathcal{D} \in H^{1}(\mathcal{D})$. We call this the offline training phase.

Next, we apply the additive Schwarz method, which we refer to as the online solution phase. We initialize $\hat{u}^{(0)}$ by extending $g$ to $\Omega$ using an extension operator detailed in Section~\ref{sec:extension_operator}. We then construct the neural operator approximation of the additive Schwarz algorithm as
\begin{equation}
\hat{\mathcal{G}}(a,g) = \sum_{j=1}^{M} \hat{\mathcal{G}}_{\mathcal{D},j}(a,g) \phi_j,
\end{equation}
where $\hat{\mathcal{G}}_{\mathcal{D},j}$ is defined as:
\begin{equation}
\hat{\mathcal{G}}_{\mathcal{D},j}(a,g)(x) = 
\begin{cases}
    \hat{\mathcal{G}}_\mathcal{D}(a|_{\Omega_j}, g|_{\Omega_j})(x), & x \in \Omega_j, \\
    0, & x \in \Omega \setminus \Omega_j.
\end{cases}
\end{equation}
Note that, since the subdomains are congruent to $\mathcal{D}$, we assume appropriate mappings (e.g., affine transformations) are applied to align the inputs with the trained domain $\mathcal{D}$. At iteration $n+1$, the solution is obtained via
\begin{align}
\hat{u}^{(n+1)}(x) = \sum_{j=1}^{M} \hat{u}_{j}^{(n+1)}(x) \phi_j(x) 
&= \sum_{j=1}^{M} \hat{\mathcal{G}}_{\mathcal{D},j}(a, u^{(n)}) \phi_j(x) \\ &= \underbrace{\hat{\mathcal{G}} \big( a, \hat{\mathcal{G}}(a, \dots ) \big)}_{n+1 \ \text{times}}(x).
\end{align}
We define the successive error as
\begin{equation} \label{eq:successive_error}
S = \|\hat{u}^{(n)} - \hat{u}^{(n-1)}\|_{L^2(\Omega)}.
\end{equation}
The iterative process is terminated when $S < \epsilon_{\text{tol}}$ for a prescribed tolerance $\epsilon_{\text{tol}} > 0$.

\begin{algorithm}[H]
\caption{Learning-based Domain Decomposition Algorithm}
\label{alg:domain_decomposition}
\begin{algorithmic}
\State \textbf{Offline pretraining (once for any microstructure pattern):}
\State Solve the PDE defined on a bounded domain $\mathcal{D}$ for various realizations of microstructures $a$ and extensions specifying boundary conditions $g$. 
\State Use this data to train the neural operator model $\hat{\mathcal{G}}_\mathcal{D}$. \\ 
\State \textbf{Online solution:}
\State \textit{Partition domain}: $\Omega = \bigcup_{j=1}^{M} \Omega_j$ into overlapping domains congruent to $\mathcal{D}$.
\State \textit{Initialize}: Extend $g$ to $\hat{u}^{(0)}$ on $\Omega$ (see Section~\ref{sec:extension_operator}).
\State $n = 0$
\While{$\|\hat{u}^{(n+1)} - \hat{u}^{(n)}\|_{L^2(\Omega)} > \epsilon_{\text{tol}}$}
\State $n \gets n + 1$
\State Compute the local solutions: for $j = 1$ to $M$
\begin{equation*}
    \hat{u}_{j}^{(n)} = \hat{\mathcal{G}}_{\mathcal{D},j}(a, \hat{u}^{(n-1)})
\end{equation*}
\State Assemble the global solution:
\begin{equation*} 
    \hat{u}^{(n)}(x) = \sum_{j=1}^M \hat{u}_{j}^{(n)}(x) \phi_j(x)
\end{equation*}
\EndWhile
\end{algorithmic}
\end{algorithm}

\section{Local Neural Operators}
\label{sec:local_neural_operators}
In this section we describe the design of our local neural operator \cite{liu2024neural}; a graphical depiction of the architecture is shown in Fig. \ref{fig:architecture}. The structure of this architecture is similar to that of a U-Net \cite{ronneberger2015u} but carefully designed so that no parameters are tied to any specific discretizations of the input data. This allows us to learn from data given at arbitrary discretizations. We will assume that input functions belong to the space $\mathcal{U}(\Omega;\R^{d_{in}})$ and corresponding outputs belong to $\mathcal{U}(\Omega;\R^{d_{out}})$.

\begin{figure}[h!]
    \centering
    \includegraphics[width=1.0\linewidth]{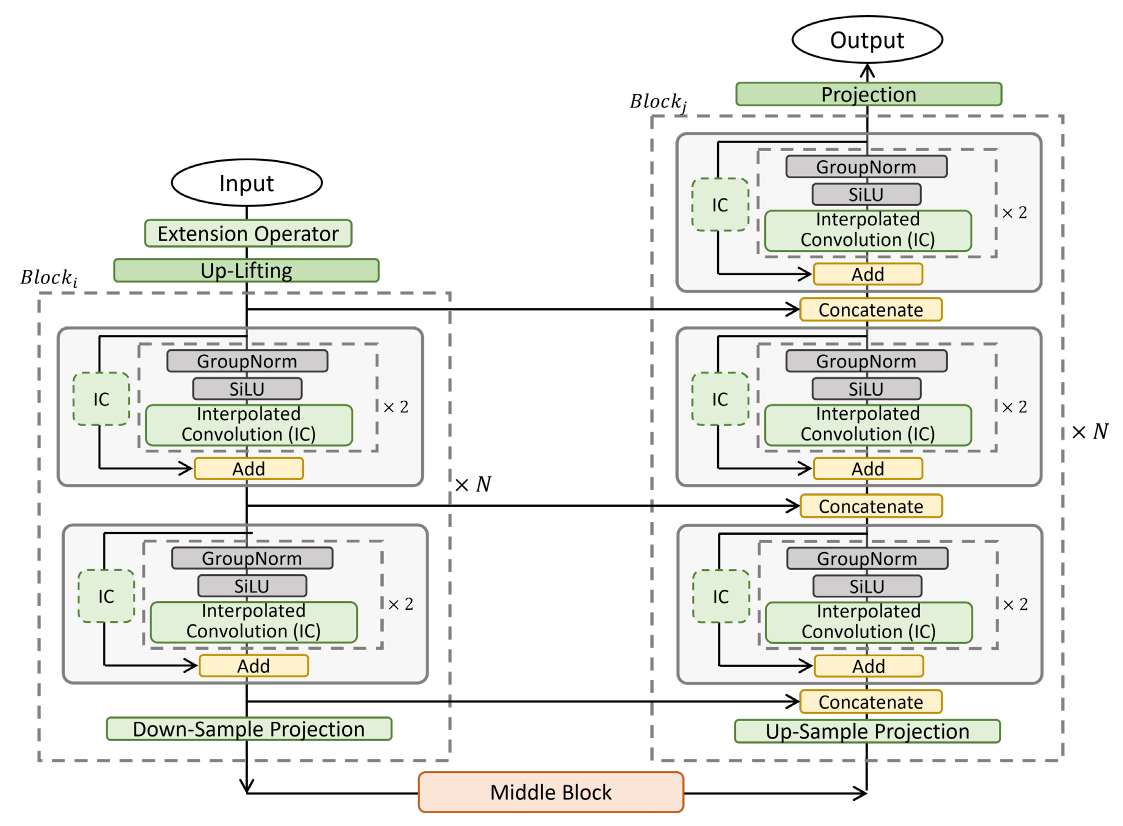}
    \caption{Schematic of the PPNO architecture. The workflow proceeds in three main stages: (1) An input lifting layer transforms the microstructure $a(x)$ and the extended boundary conditions $(\mathcal{V}g)(x)$ into a high-dimensional latent representation. (2) The core processing is performed by a series of blocks arranged in a U-shaped structure, each of which applies interpolated convolutions with a ResNet architecture to efficiently capture global dependencies. Downsampling and upsampling projectors are integrated into the contracting and expanding paths of the U-Net, respectively, to process features at multiple scales while preserving spatial consistency. (3) A final projection layer maps the high-dimensional representation back to the desired output dimension, yielding the solution field $\hat{u}$ and $\nabla_x\hat{u}$.}
    \label{fig:architecture}
\end{figure}

\subsection{Interpolated Convolution}

Convolutional neural networks have been widely successful in applications to visual and audio data \cite{Goodfellow-et-al-2016}. Their basic building block is a convolution operator with a kernel described by its values on a few set of fixed nodal points. The point-wise values of the kernel are tunable parameters that are learned from data via an optimization procedure. In this section, for simplicity, we will assume that $\Omega = (0,1)$ is the unit interval but note that the generalization to higher dimensions follows similarly. To make things concrete, let $\kappa : \R \to \R^{d_{out} \times d_{in}}$ be a matrix valued function supported on the interval $[-\frac{a}{2},\frac{a}{2}]$ for some fixed $0 < a \ll 1$. For a function $\omega: \Omega \to \R^{d_{in}}$, we abuse notation and continue to write $\omega$ for its zero extension to the whole of $\R$. We then define the convolution operator $\mathcal{F}$ as 
\begin{equation}
    \label{eq:convolution}
    \mathcal{F}(\omega)(x) = \int_{\R} \kappa(x - y)\omega(y) \: \mathsf{d}y
\end{equation}
for any $x \in \Omega$ so that $\mathcal{F}(\omega) : \Omega \to \R^{d_{out}}$. Fix some $K \in \N$ with $K > 1$ and define the set of points $x_j = -\frac{a}{2} + \frac{j-1}{K-1}a$ for $j=1,\dots,K$. These points form a uniform discretization of the interval $[-\frac{a}{2},\frac{a}{2}]$. We impose that $\kappa(x_j) = \kappa_j$ for each $j=1,\dots,K$ for some set of matrices $\kappa_1,\dots,\kappa_j \in \R^{d_{out} \times d_{in}}$ which constitute the parameters of the kernel $\kappa$. Suppose that the function $\omega$ is known on the set of points $y_j = \frac{j-1}{K-1}a$ for $j=1,\dots,s$ where $s = \frac{K-1}{a} + 1$, assuming that $a$ is chosen so that $s \in \N$. Then we approximate the integral as
\begin{equation}
    \mathcal{F}(\omega)(y_j) \approx \frac{1}{K} \sum_{l=1}^K \kappa (x_l) \omega(y_j - x_l).
\end{equation}

Suppose now that $\omega$ is given at an increased resolution, so that its values are known at the points $\tilde{y}_j = \frac{j-1}{2(K-1)}a$ for $j=1,\dots,\frac{2(K-1)}{a} + 1$. Applying the standard convolution operation that is commonly used in convolutional neural networks \cite{paszke2019pytorch} will result in computation of
\begin{equation}
    \frac{1}{K} \sum_{l=1}^K \kappa(x_l) \omega \big( \tilde{y}_j - (x_l /2) \big )
\end{equation}
which is clearly not a valid approximation of $\mathcal{F}(\omega)(\tilde{y}_j)$. The above sum may be interpreted, in the continuum, as an approximation of $\mathcal{F}(\omega)(\tilde{y}_j)$ with $\kappa$ supported on $[-\frac{a}{4}, \frac{a}{4}]$ and defined at the points $-\frac{a}{4} + \frac{j-1}{2(K-1)}a$ for $j=1,\dots,K$. In this way, $\mathcal{F}$  is not well-defined on any function space since it depends on a particular discretization of the input function $\omega$. This argument demonstrates why standard convolutional neural networks cannot be used with data given at different resolutions.

To define $\mathcal{F}$ appropriately on function space, we treat the point $\{x_j\}_{j=1}^K$ as nodal points and extend $\kappa$ to the whole of its support by a linear spline interpolation \cite{devore1993constructive}. This allows us to evaluate $\kappa$ at any point in its support at and thus evaluate $\mathcal{F}$ for different discretizations of $\omega$. In particular, we have
\begin{equation}
    \mathcal{F}(\omega)(\tilde{y}_j) \approx \frac{1}{2K} \sum_{l=1}^{2K} \kappa(\tilde{x}_l) \omega (\tilde{y}_j - \tilde{x}_l)
\end{equation}
where $\tilde{x}_j = -\frac{a}{2} + \frac{j-1}{2(K-1)}a$ for $j=1,\dots,2K-1$. This approximation can be efficiently implemented in a a differentiable way in standard deep learning frameworks such as Pytorch \cite{paszke2019pytorch} and is straightforwardly extended to higher dimensional domains. Our construction allows us to take advantage of optimization techniques for finding the parameters $\{\kappa_j\}_{j=1}^K$ that have been empirically successful for convolutional neural networks while also being able to utilize data at different discretizations.

\subsection{Single Layer}

Having appropriately defined the interpolated convolution, we can now define a single layer of our local neural operator.
In particular, we define the mapping
\begin{equation}
    \label{eq:one_layer}
    L(\omega) = \mathcal{F}_3(\omega) + (\mathcal{F}_2 \circ \sigma \circ \mathcal{N}_2 \circ \mathcal{F}_1 \circ \sigma \circ \mathcal{N}_1)(\omega)
\end{equation}
where $\sigma : \R \to \R$ is a non-linear activation function applied point wise, $\mathcal{F}_1, \mathcal{F}_2$ are two interpolated convolution operators with different parameters as previously defined, and $\mathcal{N}_1$, $\mathcal{N}_2$ are two group normalization operators with different parameters. If $\omega \in \mathcal{U}(\Omega;\R^{d_{in}})$ then $\mathcal{F}_1 : \mathcal{U}(\Omega;\R^{d_{in}}) \to \mathcal{U}(\Omega;\R^{d_{out}})$ and $\mathcal{F}_2 : \mathcal{U}(\Omega;\R^{d_{out}}) \to \mathcal{U}(\Omega;\R^{d_{out}})$. If $d_{in} = d_{out}$ then $\mathcal{F}_3 = I_{\mathcal{U}}$ is the identity on $\mathcal{U}$ and otherwise $\mathcal{F}_3 : \mathcal{U}(\Omega;\R^{d_{in}}) \to \mathcal{U}(\Omega;\R^{d_{out}})$ is an interpolated convolution operator with its own parameters. Therefore $L$ is the non-linear mapping $L: \mathcal{U}(\Omega;\R^{d_{in}}) \to \mathcal{U}(\Omega;\R^{d_{out}})$.

We will define the group normalization operator as follows. Let $\pi_1,\dots,\pi_n \subseteq \{1,\dots,d_{in}\}$ be a disjoint cover and let $k : \{1,\dots,d_{in}\} \to \{1,\dots,n\} $ be such that $l \in \pi_m$ implies $k(l) = m$ for any $l \in \{1,\dots,d_{in}\}$. For any $j \in \{1,\dots,n\}$, we define the mappings
\begin{align}
    m_{\pi_j}(\omega) &= \frac{1}{|\Omega| |\pi_j|} \sum_{l \in \pi_j} \int_{\Omega} \omega_l (x)\: \mathsf{d}x, \\
    s_{\pi_j}(\omega) &= \left ( \frac{1}{|\Omega| |\pi_j|} \sum_{l \in \pi_j} \int_{\Omega} |\omega_l (x) - m_{\pi_j}(\omega)|^2\: \mathsf{d}x \right )^{1/2} .
\end{align}
Then, for any $l \in \{1,\dots,d_{in}\}$, we define 
\begin{equation}
    \mathcal{N}(\omega)_l = \frac{\omega_l - m_{\pi_{k(l)}}(\omega)}{\epsilon + s_{\pi_{k(l)}}(\omega)} \gamma_l + \beta_l
\end{equation}
for some fixed $\epsilon > 0$ and $\gamma, \beta \in \R^{d_{in}}$ learnable parameters.
In particular, $\mathcal{N}$ computes the average mean and standard deviation of each scalar field, or channel, of $\omega$ grouped according to the partition $\pi_1,\dots,\pi_n$  and then normalizes each accordingly. It is common practice, which we follow here, to pick a particular partition of the channels and use it to define every group normalization operator throughout the architecture. Furthermore note that $\mathcal{N}$ could have similarly been defined with $d_{out}$ instead of $d_{in}$ and, indeed, we have $\mathcal{N}_1 : \mathcal{U}(\Omega;\R^{d_{in}}) \to \mathcal{U}(\Omega;\R^{d_{in}})$ and $\mathcal{N}_2 : \mathcal{U}(\Omega;\R^{d_{out}}) \to \mathcal{U}(\Omega;\R^{d_{out}})$.

\subsection{Projections and Blocks}

In the following, we will assume that $\U(\Omega)$ is a separable Hilbert space for simplicity but remark that the Banach space setting follows similarly. Let $\mathcal{B} = \{\varphi_1,\varphi_2,\dots\}$ be an orthonormal basis of $\U$. We define the mapping $D : \U \to \mathbb{N} \cup \{\infty\}$ as 
\begin{equation}
    D(\omega) = | \{\langle \omega, \varphi \rangle : \langle \omega, \varphi \rangle \neq 0, \varphi \in \mathcal{B}\} |.
\end{equation}
In particular, $D$ counts the number of non-zero coefficients of $\omega$ in the basis $\mathcal{B}$. We then define the family of projection operators
\begin{align}
    P_n (\omega) &= \sum_{j=1}^n \langle \omega, \varphi_j \rangle \varphi_j, \\
    P_\infty(\omega) &= I_{\mathcal{U}},
\end{align}
for any $n \in \N$ and $I_\mathcal{U}$ denotes the identity operator on $\mathcal{U}$. For a sequence of layers $L_1,\dots,L_k$ defined by \eqref{eq:one_layer}, we define the middle block as
\begin{equation}
    M(\omega) = (P_{D(\omega)} \circ L_k \dots \circ P_{D(\omega)} \circ L_1)(\omega).
\end{equation}
We introduce projections after each layer to ensure that the same computational grid is kept after each application of a layer. In practice, data are usually given on a grid and therefore $D(\omega) < \infty$ for initial input. After an application of the kernel integral operator defined by \eqref{eq:convolution} as well as the Nemistkii operator defined by $\sigma$, frequencies at all wavenumbers may be introduced hence the projections ensure the same grid continues to be used for the computation as is normally done in practice.

Furthermore, for two sequences of layers $\hat{L}_1,\dots,\hat{L}_k$ and $\tilde{L}_1,\dots,\tilde{L}_k$ defined by \eqref{eq:one_layer}, we define an up-sampling and a down-sampling block as
\begin{align}
    B_{\text{d}} (\omega) &= (P_{\floor*{\frac{D(\omega)}{2}}} \circ \hat{L}_k \circ P_{D(\omega)} \circ \hat{L}_{k-1} \circ \dots P_{D(\omega)} \circ \hat{L}_1) (\omega) \\
    B_{\text{u}} (\omega) &= (P_{2 D(\omega)} \circ \tilde{L}_k \circ P_{D(\omega)} \circ \tilde{L}_{k-1} \circ \dots P_{D(\omega)} \circ \tilde{L}_1) (\omega).
\end{align}
Similarly to the middle block, the inner projections ensure the same computational grid is used while the last most projections either down-sample or up-sample the input function. Typically we choose $L_1,\dots,L_{k-1} : \mathcal{U}(\Omega;\R^{d_{in}}) \to \mathcal{U}(\Omega;\R^{d_{in}})$ and $L_k : \mathcal{U}(\Omega;\R^{d_{in}}) \to \mathcal{U}(\Omega;\R^{d_{out}})$. Furthermore, we use linear splines to define the basis set $\mathcal{B}$ \cite{devore1993constructive} and, in practice, pick $k=2$ for each block.

\subsection{U-shaped Architecture}
\label{u-shaped}

We are now in position to define the entire U-Net based neural operator. We assume that the input $\omega \in \mathcal{U}(\Omega; \R^{d_{in}})$ and that the output is a member of $\mathcal{U}(\Omega;\R^{d_{out}})$. Let $T \in \mathbb{N}$ with $T \geq 3$ be so that $T-1$ represents the number of down-sampling or up-sampling blocks to be applied. Let $d_1,\dots,d_T \in \mathbb{N}$ be a sequence of dimensions indicating the number of channels after the application of some operation. We define the following sequence of operations,

\begin{align}
    \label{eq:local_no}
    \begin{split}
    \omega^{(1)} &= (P_{D(\omega)} \circ \mathcal{F}_1) (\omega), \\
    \omega^{(j)} &= B_{\text{d}}^{(j-1)} \big ( \omega^{(j-1)} \big ), \qquad \qquad \qquad\:\:\:\: 2 \leq j \leq T, \\
    \omega^{(T+1)} &= M(\omega^{(T)}), \\
    \omega^{(T+j)} &= B_{\text{u}}^{(j-1)} \big ( (\omega^{(T+j-1)}, \omega^{(T-j+2)}) \big ), \quad 2 \leq j \leq T, \\
    \omega^{(2T+1)} &= \big( P_{D(\omega)} \circ \mathcal{F}_2 \big ) \big ( \omega^{(2T)} \big ),
    \end{split}
\end{align}
where $\omega^{(2T+1)}$ is the output of the neural operator. The operator $\mathcal{F}_1 : \mathcal{U}(\Omega;\R^{d_{in}}) \to \mathcal{U}(\Omega;\R^{d_{1}})$ is called the lifting operator since, usually, $d_1$ is chosen so that $d_1 > d_{in}$. Then each down-sampling block is a mapping $B_{\text{d}}^{(j-1)} : \mathcal{U}(\Omega;\R^{d_{j-1}}) \to \mathcal{U}(\Omega;\R^{d_{j}})$ for $2 \leq j \leq T$. It is common practice to choose $d_j > d_{j-1}$. The middle block is a mapping $M : \mathcal{U}(\Omega;\R^{d_{T}}) \to \mathcal{U}(\Omega;\R^{d_{T}})$ and the up-sampling blocks are mappings $B_{\text{u}}^{(j)}: \mathcal{U}(\Omega;\R^{d_{T-j+1}} \times \R^{d_{T - j + 1}}) \to \mathcal{U}(\Omega;\R^{d_{T-j}})$ for $1 \leq j \leq T-1$. In particular, they take as input the output of the previous block as well as the down-sampling corresponding of the same latent dimension. This ensures that information is not lost due to the down-sampling operation at the beginning of the architecture. The final operator $\mathcal{F}_2 : \mathcal{U}(\Omega;\R^{d_{1}}) \to \mathcal{U}(\Omega;\R^{d_{out}})$ is usually called the projection as it linearly maps onto the expected number of output channels. A pictorial representation of the architecture is shown in Figure~\ref{fig:architecture}.

\subsection{Kernel-Based Extension Operator}
\label{sec:extension_operator}
The neural operator architectures described thus far are designed to process functions defined on the entire domain $\bar\Omega$. However, the iterative Schwarz method, and PDE problems in general, require the incorporation of boundary conditions specified on the boundary manifold $\partial \Omega$. To bridge this gap, we introduce a dedicated \textit{boundary processor}, implemented as a kernel-based extension operator, $\mathcal{V}$. This operator maps a function defined on the boundary, $g: \partial \Omega \to \mathbb{R}^{d_u}$, to a function defined over the entire domain, $(\mathcal{V}g): \bar\Omega \to \mathbb{R}^{d_u}$. The resulting domain-wide representation of the boundary data is then concatenated with the primary input fields (i.e., the microstructure $a$) and fed into the main neural operator network.

The extension operator $\mathcal{V}$ is defined by the integral transform:
\begin{equation}
(\mathcal{V}g)(x) = \int_{\partial \Omega} \kappa_g(x, s) g(s) \, ds, \quad \forall x \in \Omega,
\label{eq:extension_operator-1}
\end{equation}
where $\kappa_g: \Omega \times \partial \Omega \to \mathbb{R}$ is a specifically designed boundary kernel. And for points on the boundary itself, the operator satisfies the identity condition:
\begin{equation}
(\mathcal{V}g)(x) = g(x), \quad \forall x \in \partial\Omega,
\label{eq:extension_operator-2}
\end{equation}

The boundary kernel $\kappa_g(x, s)$ is defined to smoothly interpolate the boundary data into the interior while ensuring consistency at the boundary itself:
\begin{equation}
\kappa_g(x, s) = \frac{f_d(x, s)}{\int_{\partial\Omega} f_d(x, t) \, dt}, \quad \forall x \in \Omega.
\label{eq:boundary_kernel}
\end{equation}
The function $f_d(x, s)$ models the influence of a boundary point $s$ on an interior point $x$ based on the Euclidean distance between them. For this work, we employ an inverse-square distance weighting:
\begin{equation}
f_d(x, s) = \|x - s\|_2^{-2}.
\end{equation}
This choice assigns greater weight to boundary points in closer proximity to the target point $x$, resulting in a smoothly varying field that accurately reflects the influence of the local boundary conditions. The normalization in the denominator of the kernel ensures that $\int_{\partial\Omega} \kappa_g(x, s) ds = 1$ for all interior points. This formulation ensures that the value of the extended function at any interior point $x$ is a weighted average of all values on the boundary.

This design helps to maximize the preservation and transfer of boundary information to the subsequent network layers, in contrast to simpler methods that might only consider a subset of boundary points or use simple averaging. It provides the neural operator with a rich, continuous representation of the boundary constraints, enhancing its ability to learn the correct solution manifold. While the kernel function $\kappa_g$ could, in principle, also be learned, we found empirically that this fixed, physics-informed formulation performs robustly and effectively.

With this extension operator defined, if we wish to approximate a mapping $(a, g) \mapsto u$, where $a: \Omega \to \mathbb{R}^{d_a}$ and $g: \partial \Omega \to \mathbb{R}^{d_u}$, we first compute the extended boundary function $\mathcal{V}g$. The final input to our main neural operator $\tilde{\mathcal{G}}$ is the channel-wise concatenation of the primary input and the extended boundary function:
\begin{equation}
\hat{u} = \hat{\mathcal{G}}(a, g) = \tilde{\mathcal{G}}\left( (a, \mathcal{V}g) \right).
\end{equation}

\section{Numerical Experiments}
\label{sec:numerics}
In this section, we show that it is possible to find good approximations of the global PDE using with a nueral operator surrogate model of the local map defined in \ref{u-shaped}. We focus on demonstrating this to the elliptical PDE as defined in equation~\eqref{eq:second_order_elliptic_pde}.

\subsection{Pretrained PPNO: Learning the Local PDE}
\label{section4.1}
We first pre-train a PPNO to approximate the local PDE defined on a square unit $\Omega = (0,1)^2$. We focus our attention on the the random Voronoi crystal microstructure and let $a$ be piecewise constant taking a single value on each Voronoi cell. The locations of the cell centers are randomly sampled from a uniform measure supported in $\Omega$, while the values of $a$ on each Voronoi cell area are sampled from a uniform measure supported on [0, 10]. The pre-training data is generated with a fixed number of cells that equals 50. A key goal of the pre-trained model is its ability to generalize to Dirichlet boundary conditions. To achieve this, we first construct random periodic functions: 
\begin{align} \label{eq:random_boundary}
    \tilde{g}(s) = & \sum_{n=1}^N \Bigg[\frac{1}{(n+1)^k} a_n \cos(2\pi n (s+s_0) + b_n) + \\& \frac{1}{(n+1)^k} c_n \sin(2\pi n (s+s_0) + d_n) \nonumber + e_n \Bigg], \quad s\in [0, 1),
\end{align} 
where we fix $N = 15$, $k = 2.5$, and sample $a_n, c_n \in U(0.5, 1)$, $b_n, d_n \in U(\frac{\pi}{4}, \frac{5\pi}{4}) $, $e_n \in U(-\frac{1}{4},\frac{1}{4})$ and $s_0 \in U(0, 1)$ i.i.d. from uniform measures. 

We develop an injective mapping $\psi: [0, 1] \mapsto \partial\Omega$ to map $s \in [0, 1]$ into $x = (x_0, x_1) \in \partial\Omega$, defined as
\begin{equation}
\psi(s) = (x_0, x_1) =
\begin{cases}
  (4s, 1) &  \text{if } s \in [0, \frac{1}{4}) \\
  (1,-4s+2) &  \text{if } s \in [\frac{1}{4}, \frac{1}{2}) \\
  (-4s+3, 0) &  \text{if } s \in [\frac{1}{2}, \frac{3}{4}) \\
  (0,4s-3) &  \text{if } s \in [ \frac{3}{4}, 1)
\end{cases}
\end{equation}

Each boundary condition $g(x)$ is then generated from the random periodic function as 
\begin{equation} 
g(x) = \tilde{g}(\psi^{-1}(x))
\end{equation}

An example of the random Voronoi microstructure, as well as the random boundary condition, is shown in Fig. \ref{fig:pretrain-example} (a) and (b), respectively.

\begin{figure}[h!]
    \centering
    \includegraphics[width=0.9\linewidth]{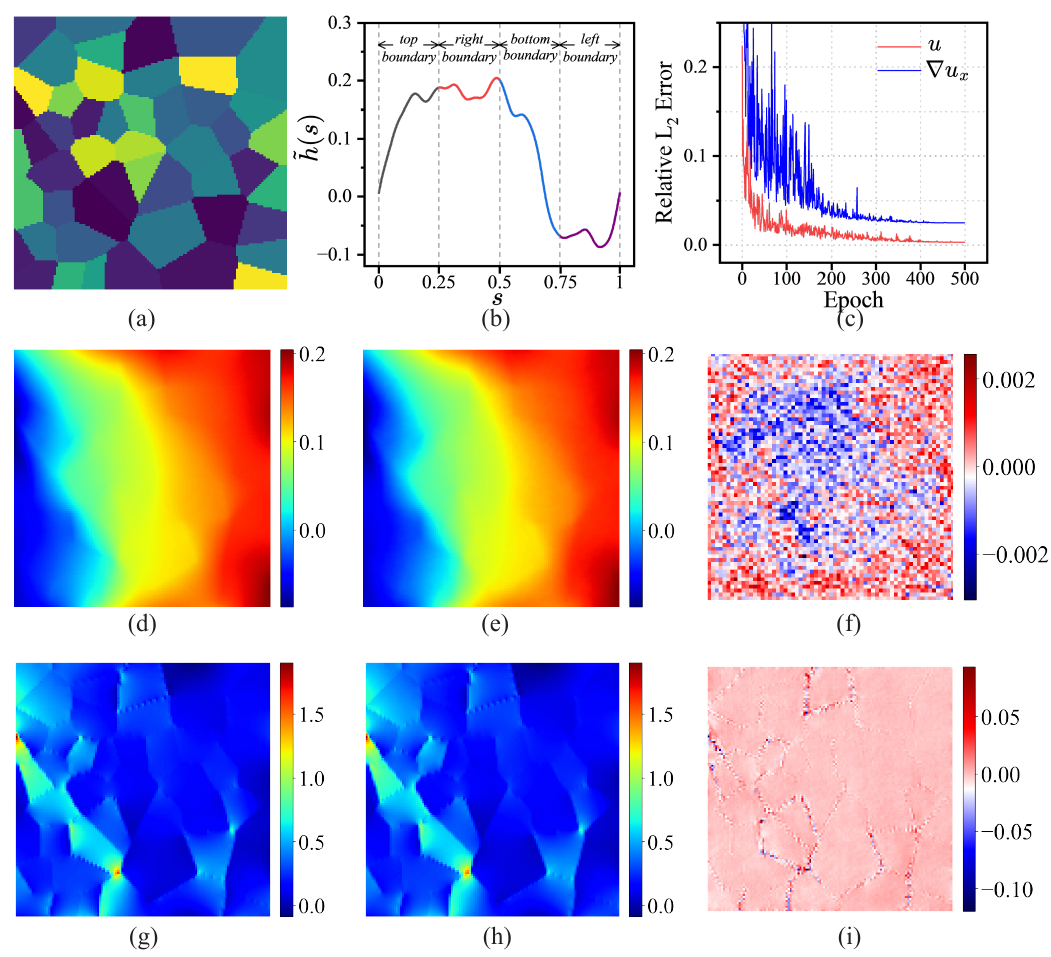}
    \caption{(a) An example of the random Voronoi microstructure. (b) An example of the periodic master function $\tilde{h}(s)$. (c) Test error versus training epochs for both $\hat{u}$ and $\nabla_x \hat{u}$. (d-f) The ground truth $u$ field(d), PPNO prediction $\hat{u}$ field(e) and error(f) for median error sample. (g-i) The ground truth(g), PPNO prediction(h) and error(i) of $\nabla_x u$ field for median error sample.}
    \label{fig:pretrain-example}
\end{figure}

Two PPNO models mapping the random Voronoi microstructure and random boundary conditions to the solution $\hat{u}$ and the gradient $\nabla \hat{u}$ are trained respectively. The models share the same architecture and for the sake of simplicity, we detail the training strategy for the PPNO model that predicts the $\hat{u}$ field, and note that the training strategy for predicting the $\nabla \hat{u}$ field remains the same.

The model is pretrained using the empirical estimate of the mean squared $L_2$ norm:
\begin{equation}
\text{Loss}(\theta) = \frac{1}{N} \sum_{n=1}^N \left( \| \hat{u}_{n} - u_{n} \|^2_{L^2}  \right),
\end{equation}
where $n$ is the sample index, $u$ is the true solution and $\hat{u}$ is the PPNO approximation of the solution parameterized by $\theta$. In the analysis, we test the pretrained model with the relative $L_2$ error: 
\begin{equation}
\text{Relative } L_2 \text{ Error} \text{ (RLS)} = \frac{1}{N} \sum_{n=1}^N \frac{\| \hat{u}_{n} - u_{n} \|^2_{L^2} }{\| u_{n} \|^2_{L^2}}.
\end{equation}

A total of 10,000 data samples are generated in the pre-training phase. The models are trained with 8000 samples generated using the Matlab finite element solver \cite{PDEToolbox} with a fixed grid resolution $129 \times 129$. 2000 data samples are used to test the trained model. Visualizations of median-error test samples for the prediction of $u$ and $\nabla_x{u}$ can be found in Fig. \ref{fig:pretrain-example}. We use Adam optimizer with CosineAnnealing scheduler with a learning rate of 5e-5 and batch size of 10. For a model trained on 10k data for 500 epochs using these hyperparameters and accelerated with an NVIDIA A100 GPU, the training time is approximately 16 hours.
\begin{table}[H]
    \centering
    \begin{tabular}{cccc}
    \hline
         Model & Nparameters & RLS\\ \hline
         PPNO & 35,281,537 & \textbf{0.2846\%} \\
         FCNN \cite{rumelhart1986learning} & 34,099,458 & 4.436\% \\
         CNN \cite{krizhevsky2012imagenet} & 37,243,265 & 4.625\% \\
         ViT \cite{dosovitskiy2020image} & 34,629,945 & 1.908\% \\
         FNO \cite{li2020fourier} & 39,537,409 & 1.057\% \\
    \hline
    \end{tabular}
    \caption{Benchmark Test}
    \label{tab:benchmarks}
\end{table}



As can be seen in Figure \ref{fig:pretrain-example}, trained PPNO models are capable of predicting the solution fields with good accuracy. The model predicting the gradient field $\nabla_x \hat{u}$ leads to a higher relative $L_2$ error (2.395\%) compared to that of the $\hat{u}$ field (0.2846\%). We observe that the error in predicting $\nabla \hat{u}$ is one order of magnitude higher on the boundary of the Voronoi cell compared to that inside the cell, which is consistent with the findings from \cite{bhattacharya2024learning}.

\textbf{Benchmarks} We benchmark the proposed PPNO model against other widely adopted models commonly employed in PDE learning tasks. Specifically, we adapt five previously published architectures: Fully Connected Neural Network (FCNN), Convolutional Neural Network (CNN), Transformer, Vision Transformer (ViT), and Fourier Neural Operator (FNO). To ensure a fair comparison, we have carefully adjusted each model so that their parameter counts are approximately equal. All models are trained and evaluated on the same dataset utilized by the pre-trained PPNO. The comparative results, including model parameters, inference time, and test loss, are presented in Table \ref{tab:benchmarks}. We observe that the PPNO demonstrates superior performance with lowest relative $L_2$  error when tested against unseen data.

\textbf{Resolution robustness} We further investigate the discretization robustness of the PPNO model. The results of this experiment are presented in Figure \ref{fig:invariant-loss}(a). By training and testing an identical PPNO model across a range of spatial resolutions, we observe that the test error remains consistently low, indicating strong robustness to changes in discretization. We emphasize that evaluating the PPNO on different resolutions is trivial in implementation by design. 

\begin{figure}[H]
    \centering
    \includegraphics[width=1.0\linewidth]{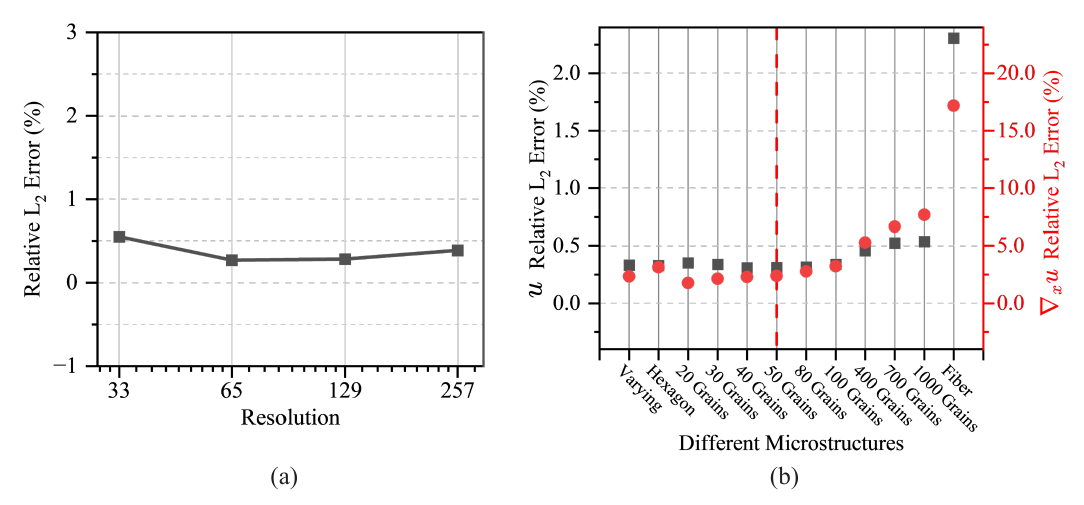}
    \caption{(a) Relative L2 test error of pre-trained PPNO model on 4 different resolutions. (b) Relative L2 error of pre-traiend PPNO model evaluated on test samples with different number of grains or microstructures for both $\hat{u}$ and $\nabla_x \hat{u}$.}.
    \label{fig:invariant-loss}
\end{figure}

\textbf{Out-of-Distribution Generalization} \label{sec:pretrain-out-of-distribution} We demonstrate that the trained PPNO model effectively generalizes to microstructure patterns significantly different from those encountered during training. Specifically, the model's performance is evaluated on four distinct classes of microstructure patterns:

\begin{enumerate}
\item Random Voronoi crystal microstructures with uniformly distributed cell centers, featuring a wide range of cell counts from 10 to 1000.
\item Graded Voronoi crystal microstructures characterized by a skewed distribution of cell centers biased toward the bottom-left corner of the domain.
\item Hexagonal microstructures approximating a hexagonal close-packed (HCP) crystal structure, where the field $a$ is piecewise constant within each hexagonal cell.
\item Fibre composite microstructures comprising a matrix embedded with circular inclusions (fibres), wherein the field $a$ assumes distinct constant values in the matrix and fibres.
\end{enumerate}

Representative visualizations selected based on median test errors from these four classes are presented in Figure \ref{fig:microstructure-invarient}, accompanied by the approximate solutions of $u$ and $\nabla u$ predicted by a PPNO model merely trained on microstructures containing 50 grains. The full visualizations of the model predictions on all other cases listed above can be found in Appendix Figure \ref{fig:ap-test-u-ux}. Quantitative comparisons using RLS errors are summarized in Figure \ref{fig:invariant-loss}(b). Remarkably, the relative $L_2$ error exhibits minimal variation despite significant changes in the grain count, spanning a range from 10 to 1000. Furthermore, the errors remain nearly constant under moderate variations in the statistical properties governing microstructure generation, as demonstrated by the skewed Voronoi and hexagonal cases. Although a modest increase in the RLS error is noted when testing the model against fibre-composite microstructures---whose statistical characteristics differ substantially from those of Voronoi crystals---the model maintains bounded errors and accurately predicts local concentration patterns in the gradient field between adjacent fibers.

\begin{figure}[H]
    \centering
    \includegraphics[width=0.9\linewidth]{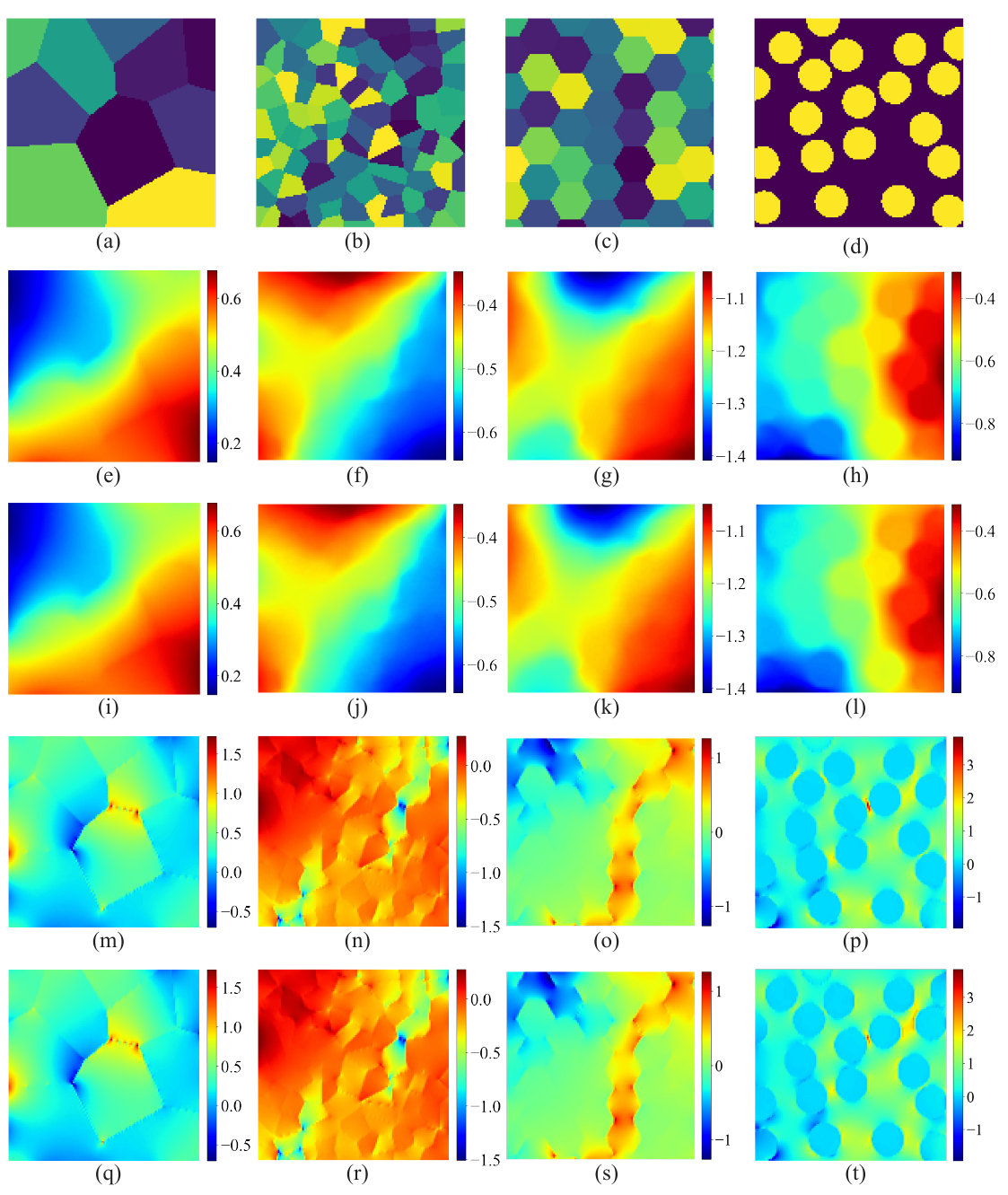}
    \caption{(a-d) Voronoi crystal microstructure of 10 random grains, 100 grains, hexagonal microstructure, and fiber composite microstructure. (e-h) Ground truth of $u$ for (a-d) with random boundary conditions. (i-l) PPNO prediction $\hat{u}$ for all microstructures. (m-p) Ground truth of $\nabla u$ for (a-d) with random boundary conditions. (q-t) PPNO prediction $\nabla \hat{u}$ for all microstructures.}
    \label{fig:microstructure-invarient}
\end{figure}

\subsection{Learning-based Domain Decomposition}
\label{section_4.2} 
Having established the PPNO, we now demonstrate its application within the learning-based domain decomposition framework, following the additive Schwarz procedure detailed in Algorithm~\ref{alg:domain_decomposition}. We evaluate the method on a series of test cases with increasing geometric and microstructural complexity to assess its accuracy, efficiency, and robustness.

\textbf{Large-Scale Square Domain with a Random Voronoi Microstructure}

We first assess the scalability of our approach by applying it to an elliptical PDE defined on a large square domain that is 50 times larger than the the domain employed in the training data set, as illustrated in Figure \ref{fig:square_large}(b). Specifically, we generate 2583 grains by randomly sampling cell centers from a uniform distribution supported on the domain $\Omega = [0,kL_0]^2$, where $L_0$ represents the length of the square domain from the pre-training dataset, and $k = 7.1875$ is a fixed scaling factor. 

For the solution process, the global domain is partitioned into a $10 \times 10$ grid of identical square subdomains with a 31.25\% overlap ratio, as depicted in Figure \ref{fig:square_large}(c). Each subdomain thus contains approximately 50 grains, consistent with the scale of the pre-training data. Random Dirichlet boundary conditions, generated according to Eq.~\eqref{eq:random_boundary} with parameters specified in \ref{boundary_params}, are applied to the exterior edges (Figure \ref{fig:square_large}(c)).

Figure \ref{fig:solution-square} presents the results of this simulation. The learning-based solution demonstrates excellent agreement with the ground truth for both the $\hat{u}$ field and its gradient $\nabla_x\hat{u}$. The convergence plots (Figure \ref{fig:solution-square}(c,f)) show that the successive error rapidly diminishes, although it requires approximately 100 iterations to reach the tolerance threshold. Upon convergence, the final relative $L_2$ error against the ground truth solution stabilizes at 0.645\% for the $\hat{u}$ field and 5.99\% for the $\nabla_x\hat{u}$ field. Notably, this final accuracy error is of the same order of magnitude as the generalization error of the pre-trained PPNO on single-domain test cases. This indicates that the domain decomposition framework does not introduce significant additional error, and the overall accuracy is primarily limited by the performance of the underlying neural operator.

\begin{figure}[H]
    \centering
    \includegraphics[width=1.0\linewidth]{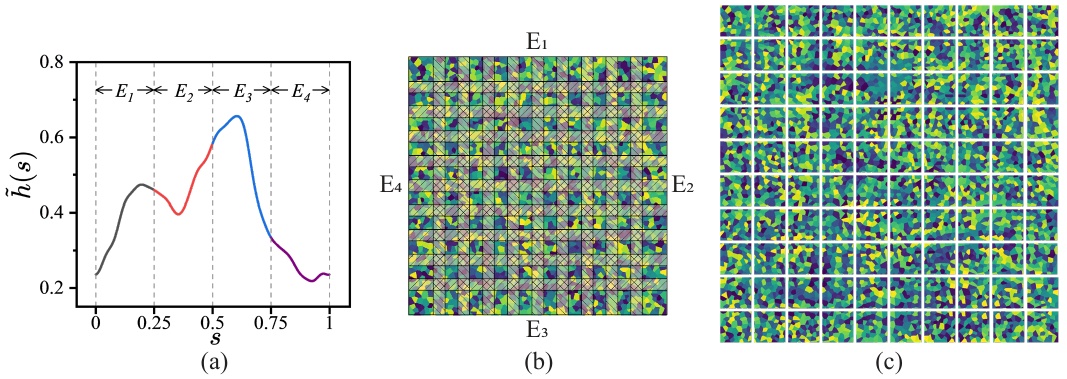}
    \caption{(a) The random Dirichlet boundary conditions applied to the 4 edges of the scaled square domain. (b) The square domain with Voronoi crystal microstructure decomposed into $10 \times 10$ subdomains, where the shaded area represents the overlapping of subdomains. (c) 100 subdomains each containing 50 grains after decomposition.}
    \label{fig:square_large}
\end{figure}

\begin{figure}[H]
    \centering
    \includegraphics[width=1.0\linewidth]{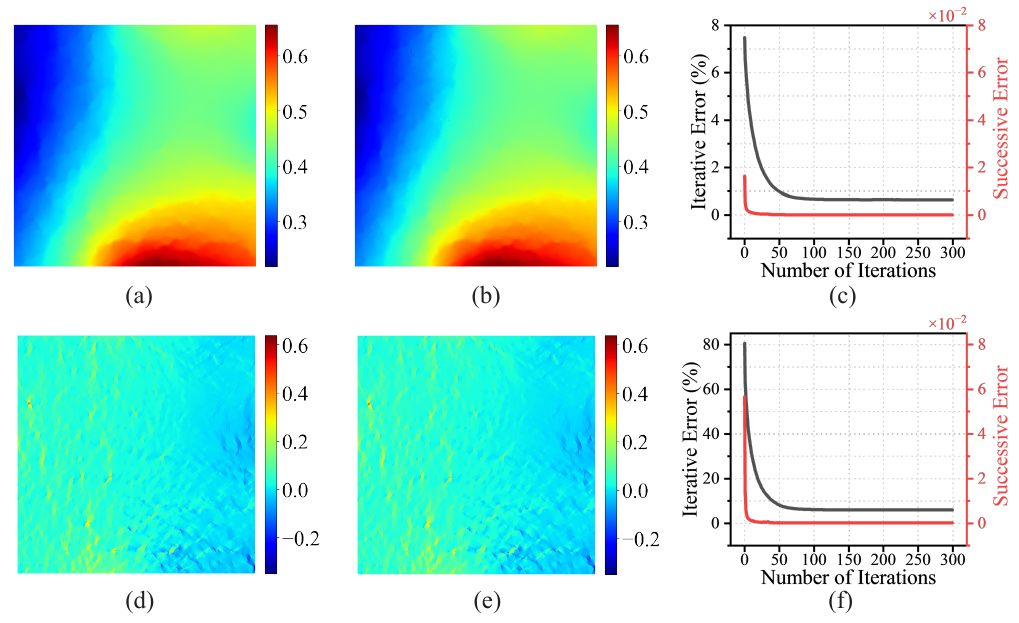}
    \caption{(a-b) The truth solution $u$ and learning-based solution $\hat{u}$ field for the $10 \times 10$ rectangular domain, with boundary conditions prescribed in Fig.\ref{fig:square_large}c applied. (c) The iterative (black line) and successive (red line) error versus the number of iterations of the $\hat{u}$ field. (d-e) The truth solution $\nabla_x u$ and learning-based solution $\nabla_x\hat{u}$ field for the same rectangular domain with the same boundary conditions applied. (g) The iterative (black line) and successive (red line) error versus the number of iterations of the $\nabla_x\hat{u}$ field.}
    \label{fig:solution-square}
\end{figure}

\textbf{L-Shaped Domain with a High-Density Voronoi Microstructure}

Next, to evaluate the method's performance on a non-convex geometry, we consider an L-shaped domain. This domain is partitioned into 12 overlapping subdomains (Figure \ref{fig:Lshape}(b-c)). The microstructure in this case is a high-density random Voronoi tessellation, with approximately 500 grains per subdomain, representing an out-of-distribution test case as described in Section \ref{sec:pretrain-out-of-distribution}. Random Dirichlet boundary conditions are applied to the six exterior edges of the domain (Figure \ref{fig:Lshape}(a)).

The results, shown in Figure \ref{fig:Lshape-solution}, again confirm the method's robustness. The learning-based solution accurately captures the complex field patterns induced by the non-convex geometry. The iterative process exhibits rapid convergence, with the successive error decaying to the tolerance threshold within 20 iterations (Figure \ref{fig:Lshape-solution}(c,g)). The final relative $L_2$ error against the ground truth is 0.388\% for the displacement field $\hat{u}$ and 8.10\% for its gradient $\nabla_x\hat{u}$. This successful application demonstrates the framework's ability to handle complex geometries and out-of-distribution microstructures effectively.


\begin{figure}[H]
    \centering
    \includegraphics[width=1.0\linewidth]{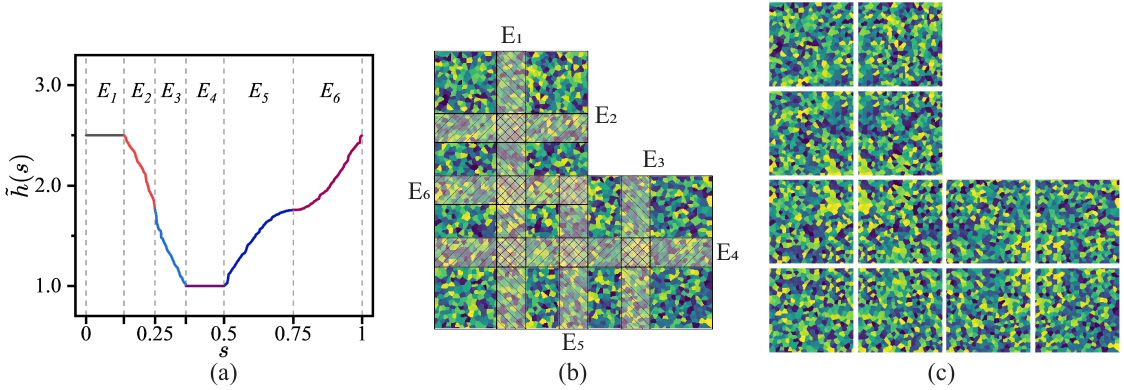}
    \caption{(a) The random Dirichlet boundary conditions applied to the 6 edges of the L-shaped domain. (b) The L-shaped domain with Voronoi crystal microstructure decomposed into 12 subdomains, where the shaded area represents the overlapping of subdomains. (c) 12 subdomains each containing 500 grains after decomposition.}
    \label{fig:Lshape}
\end{figure}

\begin{figure}[H]
    \centering
    \includegraphics[width=1.0\linewidth]{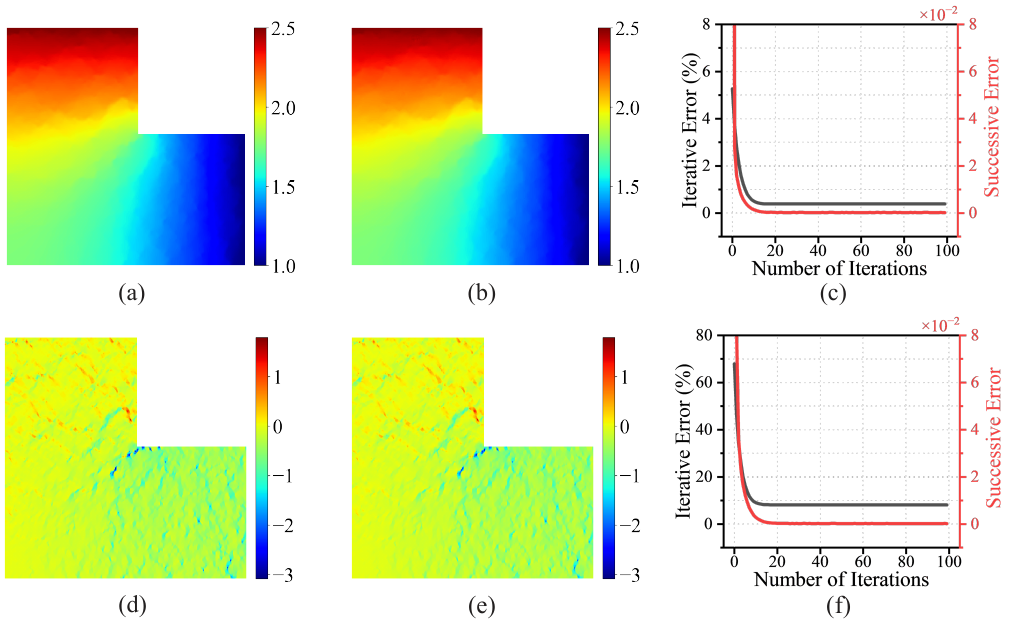}
    \caption{(a-b) The truth solution $u$ and learning-based solution $\hat{u}$ field for the L-shaped domain, with boundary conditions prescribed in Fig.\ref{fig:Lshape}c applied. (c) The iterative (black line) and successive (red line) error versus the number of iterations of the $\hat{u}$ field. (e-f) The truth solution $\nabla_x u$ and learning based solution $\nabla_x \hat{u}$ field for the same L-shaped domain with the same boundary conditions applied. (g) The iterative (black line) and successive (red line) error versus the number of iterations of the $\nabla_x\hat{u}$ field.}
    \label{fig:Lshape-solution}
\end{figure}

\textbf{I-Shaped Domain with a Graded Voronoi Microstructure}

As a final and most challenging test case, we examine an I-shaped domain, featuring a spatially graded Voronoi microstructure. The average grain size in this material gradually increases from the left to approximately 65 times at the right end of the domain (Figure \ref{fig:TTS}(b)). This problem combines a complex, non-convex geometry with significant material heterogeneity. The specimen is constructed on the domain $\Omega=[0,k_2L_0]\times[0,k_3L_0]$, where $k_2=7.8217, k_3=2.3643$, with 720 grains in total. The domain is decomposed into 23 overlapping subdomains, and prescribed Dirichlet boundary conditions are applied to its twelve exterior edges (Figure \ref{fig:TTS}(a,c)).

Despite the increased complexity, the learning-based framework produces a highly accurate solution, as shown in Figure \ref{fig:TTS-solution}. There is excellent visual agreement between the ground truth and the predicted fields for both $\hat{u}$ and $\nabla_x\hat{u}$. The final converged solution achieves a relative $L_2$ error of just 0.0392\% for the $\hat{u}$ field and 3.89\% for the $\nabla_x\hat{u}$ field. Similar to the L-shaped domain case, the algorithm demonstrates rapid convergence, requiring only 20 iterations. The exceptional performance on this challenging problem underscores the robustness of the PPNO and its effective integration within the additive Schwarz framework, capable of handling combined geometric and material complexities.



\begin{figure}[H]
    \centering
    \includegraphics[width=1.0\linewidth]{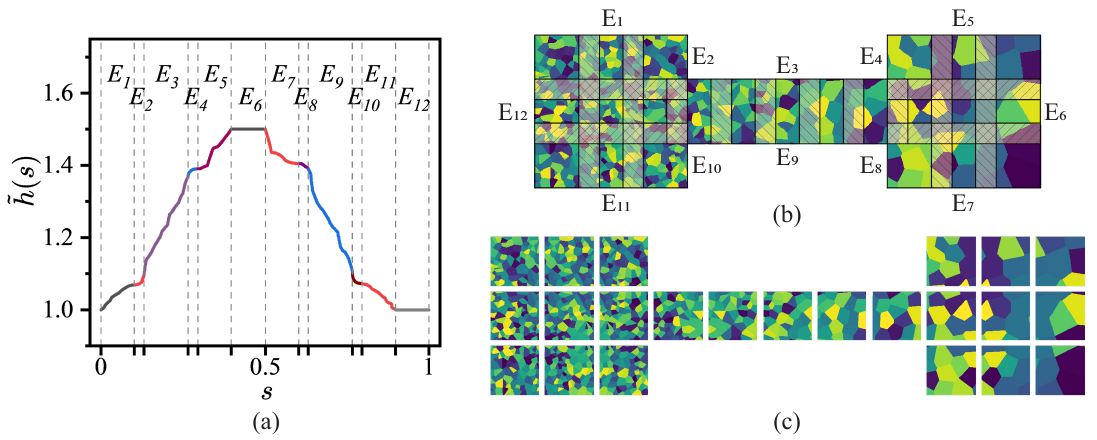}
    \caption{(a) The Dirichlet boundary conditions applied to the 12 edges of the I-shaped domain. (b) The I-shaped domain with Voronoi crystal microstructure decomposed into 23 subdomains, where the shaded area represents the overlapping of subdomains. (c) 23 subdomains containing graded number of grains after decomposition.}
    \label{fig:TTS}
\end{figure}

\begin{figure}[H]
    \centering
    \includegraphics[width=1.0\linewidth]{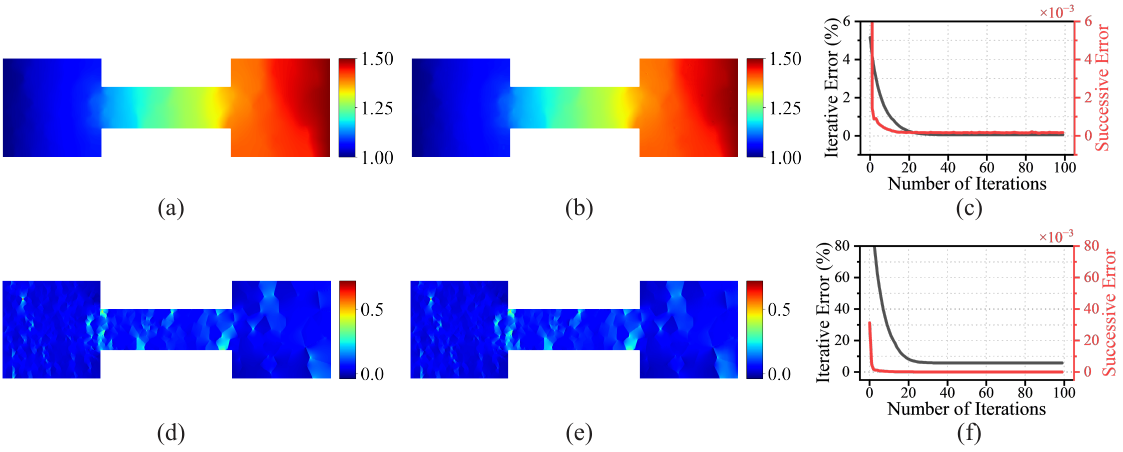}
    \caption{(a-b) The truth solution $u$ and learning-based solution $\hat{u}$ field for the I-shaped domain, with boundary conditions prescribed in Fig.\ref{fig:TTS}c applied. (c) The iterative (black line) and successive (red line) error versus the number of iterations of the $\hat{u}$ field. (d-e) The truth solution $\nabla_x u$ and learning-based solution $\nabla_x \hat{u}$ field for the same domain with the same boundary conditions applied. (g) The iterative (black line) and successive (red line) error versus the number of iterations of the $\nabla_x\hat{u}$ field.}
    \label{fig:TTS-solution}
\end{figure}

To further demonstrate the robustness of the combination of PPNO and the learning-based DDM, in \ref{sec:rectangular_domains}, we apply it on rectangular domains with out-of-distribution microstructure patterns, confirming successful convergence and strong performance beyond the training distribution. A more detailed study about the algorithm’s robustness and invariance is presented in \ref{sec:algorithmic_robust}, demonstrating stable convergence and negligible sensitivity to variations in microstructure topology, boundary conditions, and initial conditions.

\section{Concluding remarks}
\label{part6}

In this work, we have introduced a learning-based domain decomposition method, synergistically combined with the proposed physics-pretrained Neural Operator (PPNO), to efficiently solve parameterized PDEs over large and geometrically complex domains. Our primary contribution is a comprehensive methodology that leverages a neural operator that is trained only once on a simple canonical domain, as an accelerated solver within a parallel additive Schwarz iterative scheme.

We have provided a rigorous theoretical foundation for this approach, establishing an approximation theorem for the learning-based domain decomposition algorithm and proving the existence of neural operator surrogate models that can be applied to a variety of PDE-based problems. The developed PPNO architecture is shown to outperform state-of-the-art models, including the Fourier Neural Operator and Vision Transformer, for this problem class. Through extensive numerical experiments, we have validated the theoretical framework, demonstrating that the proposed method achieves high accuracy and rapid convergence. Notably, the framework maintains its robustness even when applied to domains with complex geometries (e.g., L-shaped and I-shaped) and on microstructures that are significantly different from those encountered during pre-training.

While our current implementation utilizes square subdomains congruent to the pre-training domain for simplicity, our theoretical results hold for general Lipschitz domains. This points toward several promising avenues for future research. A natural extension involves applying the method to unstructured decompositions with arbitrarily shaped subdomains, which could be achieved through appropriate geometric mappings, leveraging the length-scale invariance inherent to the elliptic PDEs considered.

Furthermore, the framework could be extended to tackle more complex, nonlinear problems. This includes time-dependent phenomena and advanced material models, such as those found in hyperelasticity and plasticity, which would require adapting the operator to handle history-dependent constitutive relations. Another exciting direction is the development of multi-physics capabilities, where different governing equations are solved on different subdomains, building upon preliminary explorations in the literature. Finally, integrating the proposed method with multi-scale homogenization techniques or employing adaptive, non-uniform domain decomposition strategies could further enhance its efficiency and applicability to a broader range of challenging problems across science and engineering.

In summary, the learning-based domain decomposition method offers a powerful and flexible paradigm for high-performance scientific computing. Our work provides both the theoretical justification and practical demonstration, paving the way for accelerated and accurate simulations of complex physical systems.

\section*{Acknowledgment}
The work was funded by EPSRC grant EP/W524633/1. The authors are grateful to Dr. Yuxuan Chen for helpful discussions. NBK is grateful to the NVIDIA Corporation for support through full-time employment. 
 
\bibliographystyle{elsarticle-num} 
\bibliography{references}

\appendix
\section{Approximation of the Schwarz Method}
\label{sec:approximation_of_the_schwarz_method}

\begin{proof}[Proof of Lemma~\ref{lemma:uniform_locally_lip}]
    Let $\tilde{L} > 0$ denote the Lipschitz constant of the map $(a,g) \mapsto u$ on $K$. Since $K$ is compact, we can find elements $(a_1,g_1),\dots,(a_r,g_r) \in K$, for some $r = r(\epsilon) \in \N$, such that, for any $(a,g) \in K$, there exists $j \in \{1,\dots,r\}$ such that
    \[\|a - a_j\|_\A + \|g-g_j\|_\U < \frac{\epsilon}{2(L + \tilde{L})}.\]
    We can therefore find numbers $N_1,\dots,N_r \in \N$ such that
    \[\|\G^{(n)}(a_j,g_j) - u(a_j,g_j)\|_\U < \frac{\epsilon}{2}, \qquad \forall \: n \geq N_j.\]
    Let $N = \max \{N_1,\dots,N_r\}$. The for any $(a,g) \in K$ and $n \geq N$, we have
    \begin{align*}
        \|\G^{(n)}(a,g) - u(a,g)\|_\U &\leq \|\G^{(n)}(a,g) - \G^{(n)}(a_j,g_j)\|_\U + \|\G^{(n)}(a_j,g_j) - u(a,g)\|_\U \\
        &\leq L \big ( \|a_1 - a_2\|_\A + \|g_1 - g_2\|_\U \big ) \\ 
        &+ \|\G^{(n)}(a_j,g_j) - u(a_j,g_j)\|_U + \|u(a_j,g_j) - u(a,g)\|_\U \\
        &< \frac{\epsilon}{2} + \big ( L+\tilde{L} \big ) \big ( \|a_1 - a_2\|_\A + \|g_1 - g_2\|_\U \big ) \\
        &< \epsilon
    \end{align*}
    as desired.
\end{proof}

\begin{proof}[Proof of Theorem~\ref{thm:main_approximation}]
    \textbf{Step 1.} Since $K$ satisfies the uniform additive Schwarz property, there exists $n \in \N$ such that
    \[\sup_{(a,g) \in K} \|\G^{(n)}(a,g) - u(a,g)\|_{C^s} < \frac{\epsilon}{2},\]
    hence it is enough to show the existence of an additive Schwarz neural operator $\hat{\G}$ such that
    \[\sup_{(a,g) \in K} \|\hat{\G}^{(n)}(a,g) - \G^{(n)}(a,g)\|_{C^s} < \frac{\epsilon}{2}.\]
    Since $a$ is kept fixed within the composition $\G^{(n)}(a,g)$, this result follows exactly as in the proof of Theorem~\ref{thm:composition_cs_cs}. Therefore, we need only show that $\G(a,\cdot)$ is locally Lipschitz continuous for every $a \in \U$ and that there exists an additive Schwarz neural operator $\hat{\G}$ such that
    \[\sup_{(a,g) \in K} \|\hat{\G}(a,g) - \G(a,g)\|_{C^s} < \epsilon.\]
    
    \textbf{Step 2.} To that end, let $a \in \U$ and $H \subset \U$ be any compact set. Let $L_1$ denote the Lipschitz constant of $\G_1$ on $H$. Furthermore, note that the set
    \[\{\G_1(a,g) \varphi + g \phi : g \in H\}\]
    is compact by continuity of $\G_1$. Let $L_2$ denote the Lipschitz constant of $\G_2(a,\cdot)$ on this set. Then for any $g_1,g_2 \in H$, we have
    \begin{align*}
        \|\G(a,g_1) - \G(a,g_2)\|_{C^s} &\leq \|\G_1(a,g_1) \varphi - \G_1 (a,g_2) \varphi \|_{C^s} \\
        &+ \|\G_2 \big( a,\G_1(a,g_1) \varphi + g_1 \phi \big ) \phi - \G_2 \big( a,\G_1(a,g_2) \varphi + g_2 \phi \big ) \phi \|_{C^s} \\
        &\leq \big ( L_1 \|\varphi\|_{C^s}  + L_2 \|\phi\|_{C^s} (L_1 \|\varphi\|_{C^s} + \|\phi\|_{C^s}) \big ) \|g_1 - g_2\|_{C^s} 
    \end{align*}
    which implies that $\G(a,\cdot)$ is locally Lipschitz continuous.
    
    \textbf{Step 3.} Consider the set 
    \[\tilde{K} =  \big \{ \big ( a, \G_1 (a,g) \varphi + g \phi \big ) : (a,g) \in K \big \}\]
    which is compact by continuity of $\G_1$. By universal approximation \cite{kovachki2023neural, lanthaler2023nonlocality}, we can find a neural operator $\hat{\G}_2$ such that
    \[\sup_{(a,g) \in \tilde{K}} \|\G_2(a,g) - \hat{\G}_2(a,g)\|_{C^s} < \frac{\epsilon}{3\|\phi\|_{C^s}}.\]
    Perusal of the universal approximation theorem shows that $\hat{\G}_2$ can be constructed so that it is globally Lipschitz. Let $\hat{L}_2 = \hat{L}_2(\epsilon)$ denote its Lipschitz constant.  We again use universal approximation to find a neural operator $\hat{\G}_1$ such that
    \[\sup_{(a,g) \in K} \|\G_1(a,g) - \hat{\G}_1(a,g) \|_{C^s} < \frac{\epsilon}{3\hat{L}_2\|\varphi\|_{C^s} \|\phi\|_{C^s}}.\]
    Without loss of generality, we may assume $\hat{L}_2, \|\varphi\|_{C^s}, \|\phi\|_{C^s} \geq 1$. It follows that
    \begin{align*}
        \sup_{(a,g) \in K} \|\G(a,g) - \hat{\G}(a,g)\|_{C^s} &\leq \sup_{(a,g) \in K} \|\G_1(a,g) \varphi - \hat{\G}_1(a,g) \varphi \|_{C^s} \\
        &+ \sup_{(a,g) \in K} \|\G_2 \big( a, \G_1(a,g) \varphi + g \phi \big ) \phi - \hat{\G}_2 \big ( a, \hat{\G}_1 (a,g) \varphi + g \phi \big ) \phi \|_{C^s} \\
        &< \frac{\epsilon}{3} + \sup_{(a,g) \in \tilde{K}} \|\phi\|_{C^s} \|\G_2(a,g) - \hat{\G}_2(a,g)\|_{C^s} \\
        &+ \sup_{(a,g) \in K} \|\hat{\G}_2 \big ( a, \hat{\G}_1(a,g)\varphi + g \phi \big ) \phi - \hat{\G}_2 \big ( a, \G_1 (a,g) \varphi + g \phi \big ) \phi \|_{C^s}  \\
        &< \frac{2 \epsilon}{3} + \hat{L}_2 \|\phi\|_{C^s} \|\varphi\|_{C^s} \sup_{(a,g) \in K} \|\G_1(a,g) - \hat{\G}_1(a,g) \|_{C^s} \\
        &< \epsilon
    \end{align*}
    as desired.
\end{proof}

\section{Composition of Operators}
\label{sec:compisiton_of_operators}

We consider a bounded Lipschitz domain $\Omega \subset \R^d$ and a locally Lipschitz continuous operator $\G : C^s (\bar{\Omega}) \to C^r (\bar{\Omega})$ where $s,r \in \N_0$ and $r \geq s$. For any \(n \in \N\), we use the notation
\[\G^{(n)}(a) = \underbrace{(\G \circ \cdots \circ \G)}_{n \text{ times}}(a), \qquad a \in C^s(\bar{\Omega}),\]
for the $n$-fold composition.

We begin by approximating $\G^{(n)}$ in the case when the output space of $\G$ is smoother than the input space. This allows us to turn boundedness 
into compactness and construct a large enough input set so that the approximation always remains within it even after $n$ compositions. Details are given below.

\begin{lemma}
    \label{lemma:compose_r_greater_s}
    Let $\G: C^s (\bar{\Omega}) \to C^r (\bar{\Omega})$ be locally Lipschitz continuous for some $r > s \geq0$. For any compact set $K \subset C^s (\bar{\Omega})$, $n \in \N$, and $\epsilon > 0$, there exists a neural operator $\hat{\G} : C^s (\bar{\Omega}) \to C^r (\bar{\Omega})$ such that
    \[\sup_{a \in K} \|\G^{(n)}(a) - \hat{\G}^{(n)}(a) \|_{C^r} < \epsilon.\]
\end{lemma}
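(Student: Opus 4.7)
The strategy is to exploit the regularity gain $r > s$ in order to compactify the iterate sets of $\G$ in $C^s(\bar{\Omega})$, and then to propagate a small approximation error through the $n$-fold composition using local Lipschitz estimates.

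First I would build a compact set that captures every iterate. Set $K_0 = K$ and inductively $K_{j+1} = \G(K_j)$ for $j = 0,\dots,n-2$. By continuity of $\G : C^s(\bar{\Omega}) \to C^r(\bar{\Omega})$ on the compact set $K_j$, the image $K_{j+1}$ is bounded in $C^r(\bar{\Omega})$. Since $\bar{\Omega}$ is bounded and $r > s$, the Arzel\`a--Ascoli theorem gives the compact embedding $C^r(\bar{\Omega}) \hookrightarrow C^s(\bar{\Omega})$, so each $K_{j+1}$ is compact in $C^s$ and $K^\star := \bigcup_{j=0}^{n-1} K_j$ is compact in $C^s$.

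Next I would enlarge $K^\star$ to absorb the drift of the neural-operator iterates. Fix $\delta > 0$ to be chosen later and define
\[
\tilde{K} \;=\; K \;\cup\; \bigcup_{j=1}^{n-1} \bigl\{ f \in C^r(\bar{\Omega}) \;:\; \operatorname{dist}_{C^r}(f, K_j) \leq \delta,\ \|f\|_{C^r} \leq R \bigr\}
\]
with $R$ large enough to contain all the $\delta$-fattenings. Since $\tilde{K}$ is the union of a compact set in $C^s$ with $C^r$-bounded sets, Arzel\`a--Ascoli again shows that $\tilde{K}$ is compact in $C^s$. Local Lipschitz continuity of $\G$ then yields a constant $L = L(\tilde{K})$ with $\|\G(a) - \G(b)\|_{C^r} \leq L \|a - b\|_{C^s}$ for all $a,b \in \tilde{K}$. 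Invoking universal approximation for neural operators, for any $\eta > 0$ I can pick a globally Lipschitz neural operator $\hat{\G}$ with $\sup_{a \in \tilde{K}} \|\G(a) - \hat{\G}(a)\|_{C^r} < \eta$.

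I would then run an induction on $j$ to control $E_j := \sup_{a \in K} \|\G^{(j)}(a) - \hat{\G}^{(j)}(a)\|_{C^r}$. Splitting
\[
\G^{(j+1)}(a) - \hat{\G}^{(j+1)}(a) = \bigl[\G(\G^{(j)}(a)) - \G(\hat{\G}^{(j)}(a))\bigr] + \bigl[\G(\hat{\G}^{(j)}(a)) - \hat{\G}(\hat{\G}^{(j)}(a))\bigr],
\]
and using $\|\cdot\|_{C^s} \leq \|\cdot\|_{C^r}$ on $\bar{\Omega}$, the first bracket is at most $L\,E_j$ and the second at most $\eta$, provided $\hat{\G}^{(j)}(a) \in \tilde{K}$. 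The containment is automatic as long as $E_j \leq \delta$, since then $\hat{\G}^{(j)}(a)$ lies within $C^r$-distance $\delta$ of $\G^{(j)}(a) \in K_j$. Unrolling gives $E_j \leq \eta (1 + L + \cdots + L^{j-1})$, and choosing $\eta$ small enough that $\eta \sum_{j=0}^{n-1} L^j < \min(\delta, \epsilon)$ simultaneously closes the induction and yields $E_n < \epsilon$.

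The main obstacle is the enlargement in the second step: we need a compact set in $C^s$ that contains a $C^r$-neighborhood of $K^\star$. In a generic Banach space this is impossible, but the hypothesis $r > s$ together with the compact embedding $C^r(\bar{\Omega}) \hookrightarrow C^s(\bar{\Omega})$ is exactly what makes the construction work. This is also why the argument does not transfer verbatim to the equal-regularity case $r = s$, which --- as the paper notes --- requires the mollification machinery of Lanthaler et al.
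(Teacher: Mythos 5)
Your proposal is correct and follows essentially the same strategy as the paper's own proof: exploit the compact embedding $C^r(\bar{\Omega}) \ssubset C^s(\bar{\Omega})$ to build one compact superset containing every iterate, extract a uniform Lipschitz constant for $\G$ there, invoke universal approximation on that set, and close the telescoping induction by verifying the approximate iterates never leave it. The only cosmetic difference is the enlargement device — you use $\delta$-fattenings of the exact iterate sets $\G^{(j)}(K)$ together with the bound $E_j<\delta$, while the paper uses closed $C^r$-norm balls of radius $2M_l$ and the bound $\|\hat{\G}^{(l)}(a)\|_{C^r}\leq 2M_l$ — and both serve the identical purpose.
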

\begin{proof}
    Since $\G(K)$ is compact in $C^r (\bar{\Omega})$, there exists a constant $M_1 > 0$ such that
    \[\sup_{a \in K} \|\G(a)\|_{C^r} \leq M_1.\]
    Define the set
    \[K_1  = \{a \in C^s(\bar{\Omega}) : \|a\|_{C^r} \leq 2M_1\}.\]
    Since $r > s$, $K_1$ is compact in $C^s(\bar{\Omega})$. Therefore $\G(K_1)$ is compact in $C^r (\bar{\Omega})$, so there exists a constant $M_2 > 0$ such that 
    \[\sup_{a \in K_1} \|\G(a)\|_{C^r} \leq M_2.\]
    Define the set 
    \[K_2 = \{a \in C^s(\bar{\Omega}) : \|a\|_{C^r} \leq 2 M_2\}\]
    which is compact in $C^s (\bar{\Omega})$. Iterating on this process, we define the set
    \[K_l = \{a \in C^s (\bar{\Omega}) : \|a\|_{C^r} \leq 2 M_l\}\] where $M_l > 0$ is such that
    \[\sup_{a \in K_{l-1}} \|\G(a)\|_{C^r} \leq M_l\]
    for any $l \geq 2.$ Define the set 
    \[\hat{K} = K \cup \bigcup_{l=1}^n K_l\]
    which is compact in $C^s (\bar{\Omega})$. Let $\hat{L}$ denote the Lipschitz constant of $\G$ on $\hat{K}$
    and define
    \[L = \sum_{l=0}^{n-1} \hat{L}^l.\]
    By the universal approximation theorem for neural operators, there exists a neural operator $\hat{\G} : C^s (\bar{\Omega}) \to C^r (\bar{\Omega})$
    such that 
    \[\sup_{a \in \hat{K}} \|\G(a) - \hat{\G}(a) \|_{C^r} < \frac{\epsilon}{L}.\]
    Without loss of generality, we may assume $M_l \geq \epsilon$ for every $l = 1,\dots,n$ and that $L \geq 1$. Notice that
    \[\sup_{a \in K} \|\hat{\G}(a)\|_{C^r} \leq \sup_{a \in K} \|\hat{\G}(a) - \G(a)\|_{C^r} + \|\G(a)\|_{C^r} \leq \epsilon + M_1 \leq 2M_1.\]
    Therefore,
    \[\sup_{a \in K} \|\hat{\G}(\hat{\G}(a))\|_{C^r} \leq \sup_{a \in K_1} \|\hat{\G}(a)\|_{C^r} \leq 2 M_2.\]
    It follows by induction that
    \[\sup_{a \in K} \|\hat{\G}^{(l)}(a)\|_{C^r} \leq 2 M_l,\]
    and, in particular, that $\hat{\G}^{(l)}(a) \in \hat{K}$ for every $l=1,\dots,n$ and $a \in K$.
    We therefore find that
    \begin{align*}
        \sup_{a \in K} \|\G^{(n)}(a) - \hat{\G}^{(n)}(a)\|_{C^r} &= \sup_{a \in K} \|\G (\G^{(n-1)}(a)) - \hat{\G}(\hat{\G}^{(n-1)}(a)) \|_{C^r} \\
        &\leq \sup_{a \in K} \|\G (\G^{(n-1)}(a)) - \G(\hat{\G}^{(n-1)}(a))\|_{C^r} \\
         &+ \sup_{a \in K} \|\G(\hat{\G}^{(n-1)}(a)) - \hat{\G}(\hat{\G}^{(n-1)}(a))\|_{C^r} \\
        &< \hat{L} \sup_{a \in K} \|\G^{(n-1)}(a) - \hat{\G}^{(n-1)}(a)\|_{C^r} + \frac{\epsilon}{L} \\
        &< \dots \\
        &< \hat{L}^{n-1} \sup_{a \in K} \|\G(a) - \hat{\G}(a)\|_{C^r} + \frac{\epsilon}{L} \sum_{l=0}^{n-2} \hat{L}^l \\
        &< \frac{\epsilon}{L} \sum_{l=0}^{n-1} \hat{L}^n \\
        &= \epsilon
    \end{align*}
    as desired.
\end{proof}

The proof of Lemma~\ref{lemma:compose_r_greater_s} immediately generalizes to the follow result which we state here for completeness.

\begin{theorem}
    Let $\A$ and $\U$ be Banach spaces such that the compact embedding $\U \ssubset \A$ holds. Assume there is a universal approximation theorem
    for neural operators approximating continuous maps $\A \to \U$. Let $\G : \A \to \U$ be locally Lipschitz continuous. Then, for any compact set $K \subset \A$, $n \in \N$, and $\epsilon > 0$, there exists a neural operator $\hat{\G} : \A \to \U$ such that
    \[\sup_{a \in K} \|\G^{(n)}(a) - \hat{\G}^{(n)}(a) \|_{\U} < \epsilon.\]
\end{theorem}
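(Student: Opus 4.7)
The plan is to transport the argument of Lemma~\ref{lemma:compose_r_greater_s} verbatim, replacing the concrete Hölder pair $(C^s, C^r)$ with the abstract pair $(\A, \U)$. The role previously played by the Sobolev/Hölder embedding $C^r \hookrightarrow C^s$ is played here by the hypothesized compact embedding $\U \ssubset \A$. The crucial payoff of the compact embedding is that it lets us turn a $\U$-boundedness estimate (which is all $\G$ produces) into compactness in $\A$ (which is what lets us iterate and extract Lipschitz constants).

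First I would build an $\A$-compact ``trapping set'' for the first $n$ iterates of $\G$ starting in $K$. Since $K$ is $\A$-compact and $\G : \A \to \U$ is continuous, $\G(K)$ is $\U$-bounded, say by $M_1$. Define $K_1 = \{a \in \U : \|a\|_\U \le 2 M_1\}$ and observe that, by the compact embedding, $K_1$ is compact in $\A$. Hence $\G(K_1)$ is again $\U$-bounded by some $M_2$, so we may set $K_2 = \{a \in \U : \|a\|_\U \le 2 M_2\}$, and so on. After $n$ steps we have compact sets $K_1, \dots, K_n$ in $\A$, and their union with $K$,
\[
\hat{K} = K \cup \bigcup_{l=1}^n K_l,
\]
is still $\A$-compact. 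Let $\hat{L}$ denote the Lipschitz constant of $\G$ on $\hat{K}$ (finite by local Lipschitz continuity plus compactness), and let $C$ denote the norm of the continuous embedding $\U \hookrightarrow \A$, so that $\|x\|_\A \le C\|x\|_\U$ for all $x \in \U$.

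Next I invoke the assumed universal approximation theorem to produce a neural operator $\hat{\G} : \A \to \U$ with
\[
\sup_{a \in \hat{K}} \|\G(a) - \hat{\G}(a)\|_\U < \frac{\epsilon}{L}, \qquad L = \sum_{l=0}^{n-1}(\hat{L} C)^l,
\]
where, without loss of generality, $L \ge 1$ and $\epsilon / L \le M_l$ for every $l = 1, \dots, n$ (shrink $\epsilon$ if needed). An induction identical to the one in Lemma~\ref{lemma:compose_r_greater_s} shows $\hat{\G}^{(l)}(a) \in K_l$ for every $a \in K$ and $1 \le l \le n$: the base case uses $\|\hat{\G}(a)\|_\U \le \|\G(a)\|_\U + \epsilon/L \le M_1 + M_1 = 2M_1$, and the inductive step is the same computation applied to $\hat{\G}^{(l-1)}(a) \in K_{l-1} \subset \hat{K}$. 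In particular, every iterate lies in $\hat{K}$, where both $\G$'s Lipschitz bound and the approximation bound are available.

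The final step is a standard telescoping argument: adding and subtracting $\G(\hat{\G}^{(l-1)}(a))$ and using the Lipschitz bound of $\G$ together with the embedding constant gives
\begin{align*}
\|\G^{(l)}(a) - \hat{\G}^{(l)}(a)\|_\U
&\le \hat{L}\,\|\G^{(l-1)}(a) - \hat{\G}^{(l-1)}(a)\|_\A + \tfrac{\epsilon}{L} \\
&\le \hat{L} C\,\|\G^{(l-1)}(a) - \hat{\G}^{(l-1)}(a)\|_\U + \tfrac{\epsilon}{L}.
\end{align*}
Iterating this recursion $n$ times and summing the geometric series yields $\sup_{a \in K}\|\G^{(n)}(a) - \hat{\G}^{(n)}(a)\|_\U < \tfrac{\epsilon}{L}\sum_{l=0}^{n-1}(\hat{L} C)^l = \epsilon$. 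The only real subtlety, and the single place where one must be careful, is the use of the compact embedding to iterate the construction of $K_l$: a naïve attempt to stay purely inside $\A$ fails because $\G$'s image lies only in $\U$, not in $\A$, so one needs precisely the compact inclusion $\U \ssubset \A$ to convert $\U$-boundedness back into $\A$-compactness at each stage. Everything else is routine.
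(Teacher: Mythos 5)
Your proposal is correct and follows essentially the same route the paper intends: it transports the trapping-set construction of Lemma~\ref{lemma:compose_r_greater_s} to the abstract pair $(\A,\U)$, using the compact embedding $\U \ssubset \A$ exactly where the lemma used $C^r \ssubset C^s$, and the paper itself only remarks that the lemma's proof ``immediately generalizes.'' The one point where you are more careful than that remark is the insertion of the embedding constant $C$ into the geometric sum $L = \sum_{l}(\hat{L}C)^l$, which is genuinely needed in the abstract setting (the paper's concrete case silently uses $\|\cdot\|_{C^s} \le \|\cdot\|_{C^r}$, i.e.\ $C=1$), so this is a correct and necessary adjustment rather than a deviation.
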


We can now state and prove the main result of this section, in particular, the case $r=s$. We establish this by approximating $\G$ first with a mollified version constructed simply by convolving its output with a mollifier. We show that this approximation can be $\epsilon$-close after $n$ compositions in a way similar to Lemma~\ref{lemma:compose_r_greater_s} except by using properties of the mollification operator to gain compactness; see \cite[Appendix A]{lanthaler2023nonlocality}. Since the mollified operator always maps into a space of smoothness greater than $s$, we can then apply Lemma~\ref{lemma:compose_r_greater_s} to finish the proof. Details are given below.

\begin{theorem}
    \label{thm:composition_cs_cs}
    Let $\G: C^s (\bar{\Omega}) \to C^s (\bar{\Omega})$ be locally Lipschitz continuous for some $s \geq 0$. For any compact set $K \subset C^s (\bar{\Omega})$, $n \in \N$, and $\epsilon > 0$, there exists a neural operator $\hat{\G} : C^s (\bar{\Omega}) \to C^{s+1} (\bar{\Omega})$ such that
    \[\sup_{a \in K} \|\G^{(n)}(a) - \hat{\G}^{(n)}(a) \|_{C^s} < \epsilon.\]
\end{theorem}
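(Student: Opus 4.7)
The plan is to interpolate between $\G$ and a neural operator through a mollified intermediate operator, thereby reducing the $r=s$ situation to the one already handled by Lemma~\ref{lemma:compose_r_greater_s}. I would introduce the mollification operator $M_\delta$ from Appendix~A of \cite{lanthaler2023nonlocality}, which has three crucial properties: (i) $M_\delta$ maps $C^s(\bar\Omega)$ boundedly into $C^{s+1}(\bar\Omega)$ for every $\delta>0$; (ii) $\sup_{0<\delta\leq 1}\|M_\delta\|_{C^s\to C^s}=:C_0<\infty$; and (iii) $M_\delta a\to a$ in $C^s(\bar\Omega)$ uniformly on compact subsets as $\delta\to 0$. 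Define $\G_\delta := M_\delta\circ\G : C^s(\bar\Omega)\to C^{s+1}(\bar\Omega)$, which inherits local Lipschitz continuity from $\G$ since $M_\delta$ is bounded linear.

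First, I would mimic the iterative construction from Lemma~\ref{lemma:compose_r_greater_s} to build a single compact set $\hat{K}\subset C^s(\bar\Omega)$ containing $\G^{(l)}(K)$ and $\G_\delta^{(l)}(K)$ for all $l=0,\dots,n-1$ and all $\delta\in(0,1]$. Property (ii) is essential here: it allows each radius $M_l$ in the construction to be inflated by a single universal factor $C_0$, yielding sets that work uniformly in $\delta$. Let $\hat{L}$ denote the Lipschitz constant of $\G$ on $\hat{K}$.

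Second, for each $a\in K$ I would estimate $e_l:=\G^{(l)}(a)-\G_\delta^{(l)}(a)$ by splitting
\begin{equation*}
e_{l+1}=(I-M_\delta)\G\bigl(\G^{(l)}(a)\bigr)+M_\delta\bigl[\G(\G^{(l)}(a))-\G(\G_\delta^{(l)}(a))\bigr],
\end{equation*}
which yields the recursion $\|e_{l+1}\|_{C^s}\leq\eta(\delta)+C_0\hat L\|e_l\|_{C^s}$, where $\eta(\delta):=\sup_{y\in\hat K}\|(I-M_\delta)\G(y)\|_{C^s}$. Iterating gives $\|e_n\|_{C^s}\leq\eta(\delta)\sum_{l=0}^{n-1}(C_0\hat L)^l$. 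Since $\G(\hat K)$ is compact in $C^s(\bar\Omega)$, property (iii) forces $\eta(\delta)\to 0$ as $\delta\to 0$, so I fix $\delta$ small enough that $\sup_{a\in K}\|\G^{(n)}(a)-\G_\delta^{(n)}(a)\|_{C^s}<\epsilon/2$. Applying Lemma~\ref{lemma:compose_r_greater_s} with $r=s+1$ to $\G_\delta:C^s\to C^{s+1}$ then produces a neural operator $\hat{\G}:C^s(\bar\Omega)\to C^{s+1}(\bar\Omega)$ with $\sup_{a\in K}\|\G_\delta^{(n)}(a)-\hat\G^{(n)}(a)\|_{C^{s+1}}<\epsilon/2$, and the triangle inequality finishes the argument since $\|\cdot\|_{C^s}\leq\|\cdot\|_{C^{s+1}}$.

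The main obstacle is maintaining invariance of $\hat K$ under iteration of $\G_\delta$ uniformly in $\delta$, which hinges on the uniform $C^s\to C^s$ boundedness of the mollifier in property (ii). This is precisely the property that fails in the Sobolev setting, which is why the theorem is confined to spaces of continuously differentiable functions and why the authors defer the Sobolev extension to future work.
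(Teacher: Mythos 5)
Your overall strategy coincides with the paper's: replace $\G$ by the mollified operator $\M_{\delta}\circ\G$ mapping into $C^{s+1}(\bar{\Omega})$, show that the $n$-fold compositions remain $\epsilon/2$-close via a geometric recursion in the Lipschitz constant, and then invoke Lemma~\ref{lemma:compose_r_greater_s} for the smoothing operator before concluding with the triangle inequality. Your identification of the uniform $C^s\to C^s$ bound on the mollifier as the decisive ingredient, and of its failure in the Sobolev setting, is also exactly the paper's point. Your error recursion (applying the Lipschitz bound to $\G$ after the mollifier, giving the constant $C_0\hat L$) is an equivalent variant of the paper's (which applies it before, giving $L_\delta$); either works.

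There is, however, one step that does not go through as written: the construction of the invariant compact set $\hat K$. You propose to mimic the iterative construction of Lemma~\ref{lemma:compose_r_greater_s}, inflating each radius by the universal factor $C_0$. But in that lemma the sets $K_l$ are norm balls in $C^r$ with $r>s$, and it is precisely the compact embedding $C^r(\bar\Omega)\ssubset C^s(\bar\Omega)$ that makes them compact in $C^s$. Here $r=s$, so a $C^s$-ball is merely bounded, not compact, in $C^s(\bar\Omega)$; and since $\G$ is only locally Lipschitz, it need not admit a finite Lipschitz constant $\hat L$ on a bounded non-compact set (local Lipschitzness upgrades to a global constant only on compact sets). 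Without a finite $\hat L$ that is uniform in $\delta$, your recursion constant $C_0\hat L$ and the threshold for $\eta(\delta)$ are not available. The paper circumvents this by defining $K_{\delta,l}=\bigcup_{0\le\delta'\le\delta}\{\M_{\delta'}\G(a):a\in K_{\delta,l-1}\}$ and using the joint continuity of $(\delta',a)\mapsto\M_{\delta'}\G(a)$ on the compact set $[0,\delta]\times K_{\delta,l-1}$ (Lemma A.4 of Lanthaler et al.) to conclude that each $K_{\delta,l}$, hence $K_\delta=K\cup\bigcup_{l=1}^n K_{\delta,l}$, is genuinely compact in $C^s(\bar\Omega)$; the Lipschitz constant is taken on this compact set for a fixed $\delta$, and only afterwards is $\delta_0\le\delta$ shrunk. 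With that substitution your argument is complete.
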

\begin{proof}
    \textbf{Step 1.} We will first construct a mollified version of $\G$. For this, we will use the family of operators $\M_\delta : L^1(\Omega) \to C^\infty (\bar{\Omega})$ constructed in \cite[Lemma A.2]{lanthaler2023nonlocality}. Fix $\delta > 0$ and define the sets
    \begin{align*}
        K_{\delta,1} &= \bigcup_{0 \leq \delta' \leq \delta} \{\M_{\delta'} \G(a) : a \in K\}, \\
        K_{\delta,l} &= \bigcup_{0 \leq \delta' \leq \delta} \{\M_{\delta'} \G(a) : a \in K_{\delta,l-1}\}, \quad l\geq 2.
    \end{align*}
    By continuity of $\G$ and \cite[Lemma A.4]{lanthaler2023nonlocality}, $K_{\delta, l}$ is compact in $C^s(\bar{\Omega})$ for each $l \geq 1$. Therefore the set
    \[K_\delta = K \cup \bigcup_{l=1}^n K_{\delta,l}\]
    is also compact in $C^s (\bar{\Omega})$. Let $L_\delta$ denote the Lipschitz constant of $\G$ on $K_\delta$ and define 
    \[L = \sum_{l=0}^{n-1} L_\delta^l.\]
    By \cite[Lemma A.2]{lanthaler2023nonlocality}, we can find some $0 < \delta_0 \leq \delta$ such that
    \[\sup_{a \in K_\delta} \|\G(a) - \M_{\delta_0} \G (a) \|_{C^s} < \frac{\epsilon}{2L}. \]
    Therefore, we have that
    \begin{align}
        \label{eq:molly_approx_truth_c0}
        \begin{split}
            \sup_{a \in K} \|\G^{(n)}(a) - (\M_{\delta_0} \G)^{(n)}(a) \|_{C^s} &= \sup_{a \in K} \|\G(\G^{(n-1)}(a)) - \M_{\delta_0} \G ( (\M_{\delta_0} \G)^{(n-1)}(a)) \|_{C^s} \\
            &\leq \sup_{a \in K} \|\G (\G^{(n-1)}(a)) - \G ( (\M_{\delta_0} \G)^{(n-1)}(a)) \|_{C^s} \\
            &+ \sup_{a \in K} \|\G ( (\M_{\delta_0} \G)^{(n-1)}(a)) - \M_{\delta_0} \G ( (\M_{\delta_0} \G)^{(n-1)}(a))\|_{C^s} \\
            &< L_\delta \sup_{a \in K} \|\G^{(n-1)}(a) - (\M_{\delta_0} \G)^{(n-1)}(a) \|_{C^s} + \frac{\epsilon}{2L} \\
            &\leq ... \\
            &\leq L^{N-1}_\delta \sup_{a \in K} \|\G(a) - \M_{\delta_0} \G (a) \|_{C^s} + \frac{\epsilon}{2L} \sum_{l=0}^{n-2} L_\delta^j  \\
            &< \frac{\epsilon}{2L} \sum_{l=0}^{n-1} L_\delta^l \\
            &= \frac{\epsilon}{2}.
        \end{split}
    \end{align}

    \textbf{Step 2.} Let $Q \subset C^s (\bar{\Omega})$ be any compact set. Notice that, for any $v,w \in Q$,
    \[\|\M_{\delta_0} \G (v) - \M_{\delta_0} \G (w) \|_{C^{s+1}} \leq \|\M_{\delta_0}\|_{C^{s} \to C^{s+1}} \|\G(v) - \G(w)\|_{C^s} \leq L_Q \|\M_{\delta_0}\|_{C^{s} \to C^{s+1}} \|v - w\|_{C^s} \]
    where $L_Q$ is the Lipschitz constant of $\G$ on $Q$ and $\|\M_{\delta_0}\|_{C^{s} \to C^{s+1}} < \infty$ by continuity \cite[Lemma A.2]{lanthaler2023nonlocality}. In particular, $\M_{\delta_0} \G : C^s (\bar{\Omega}) \to C^{s+1} (\bar{\Omega})$ is locally Lipschitz. By Lemma~\ref{lemma:compose_r_greater_s}, we can find a neural operator $\hat{\G}: C^s (\bar{\Omega}) \to C^{s+1} (\bar{\Omega})$ such that
    \[\sup_{a \in K} \|(\M_{\delta_0} \G)^{(n)} (a) - \hat{\G}^{(n)}(a)\|_{C^{s+1}} < \frac{\epsilon}{2}.\]
    Therefore, using \eqref{eq:molly_approx_truth_c0}, we find that
    \begin{align*}
        \sup_{a \in K} \|\G^{(n)}(a) - \hat{\G}^{(n)} (a)\|_{C^s} &\leq \sup_{a \in K} \|\G^{(n)}(a) - (\M_{\delta_0} \G)^{(n)}(a) \|_{C^s} + \|(\M_{\delta_0} \G)^{(n)} (a) - \hat{\G}^{(n)}(a)\|_{C^{s}} \\
        &< \frac{\epsilon}{2} + \sup_{a \in K} \|(\M_{\delta_0} \G)^{(n)} (a) - \hat{\G}^{(n)}(a)\|_{C^{s+1}} \\
        &< \epsilon
    \end{align*}
    as desired.
\end{proof}

\section{Square domains with out-of-distribution microstructure patterns}
\label{sec:squre_domains}
The trained PPNO exhibits generalization capabilities when evaluated on microstructures unseen during training. The model is trained exclusively on a dataset of square domains, each containing 50 grains. Subsequently, the model is tested on out-of-distribution (OOD) cases featuring two key variations: (i) domains with grain counts ranging widely from 10 to 1,000, and (ii) domains with non-uniformly distributed grain sizes. Figure \ref{fig:ap-test-u-ux} compares the PPNO predictions against the ground-truth solutions for both the field variable $u$ and its gradient $\nabla u_x$. The results show a remarkable correspondence between the predictions and the ground truth, highlighting the model's robust ability to generalize to structural complexities far beyond its training data.

\begin{figure}[H]
    \centering
    \includegraphics[width=1.0\linewidth]{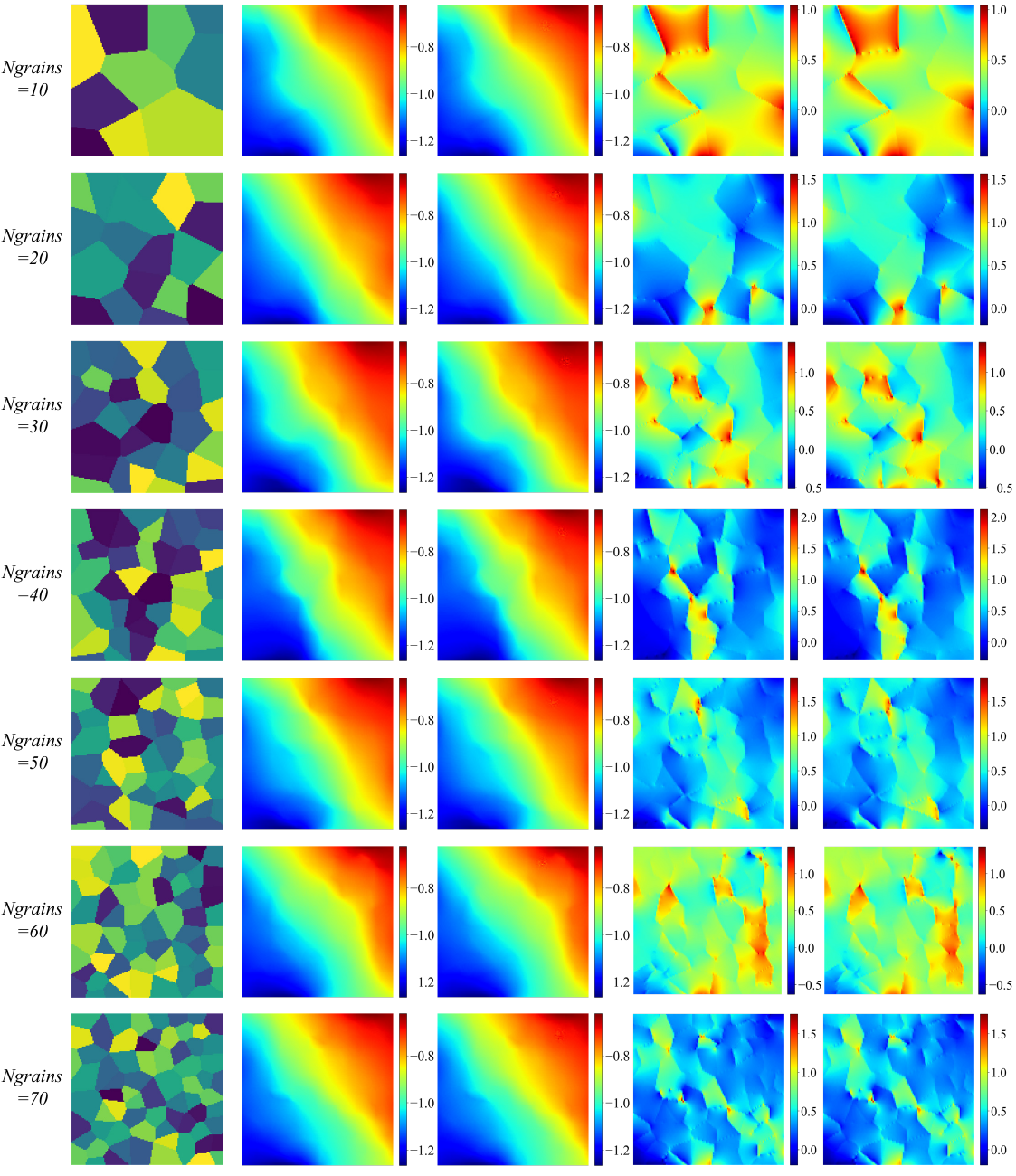}
    \caption{Generalization performance of the PPNO on out-of-distribution microstructures. The model was trained exclusively on 50-grain domains. The test cases shown feature varying grain counts and non-uniform sizes. The columns present, from left to right: (a) the input microstructure, where colors indicate unique grain orientations; (b) the ground-truth $u$ field and (c) the corresponding PPNO prediction; and (d) the ground-truth $\nabla u_x$ field and (e) its prediction. The high fidelity of the predictions underscores the model's robust generalization capabilities.}
    \label{fig:ap-test-u-ux}
\end{figure}

\begin{figure}[H]\ContinuedFloat
    \centering
    \includegraphics[width=1.0\linewidth]{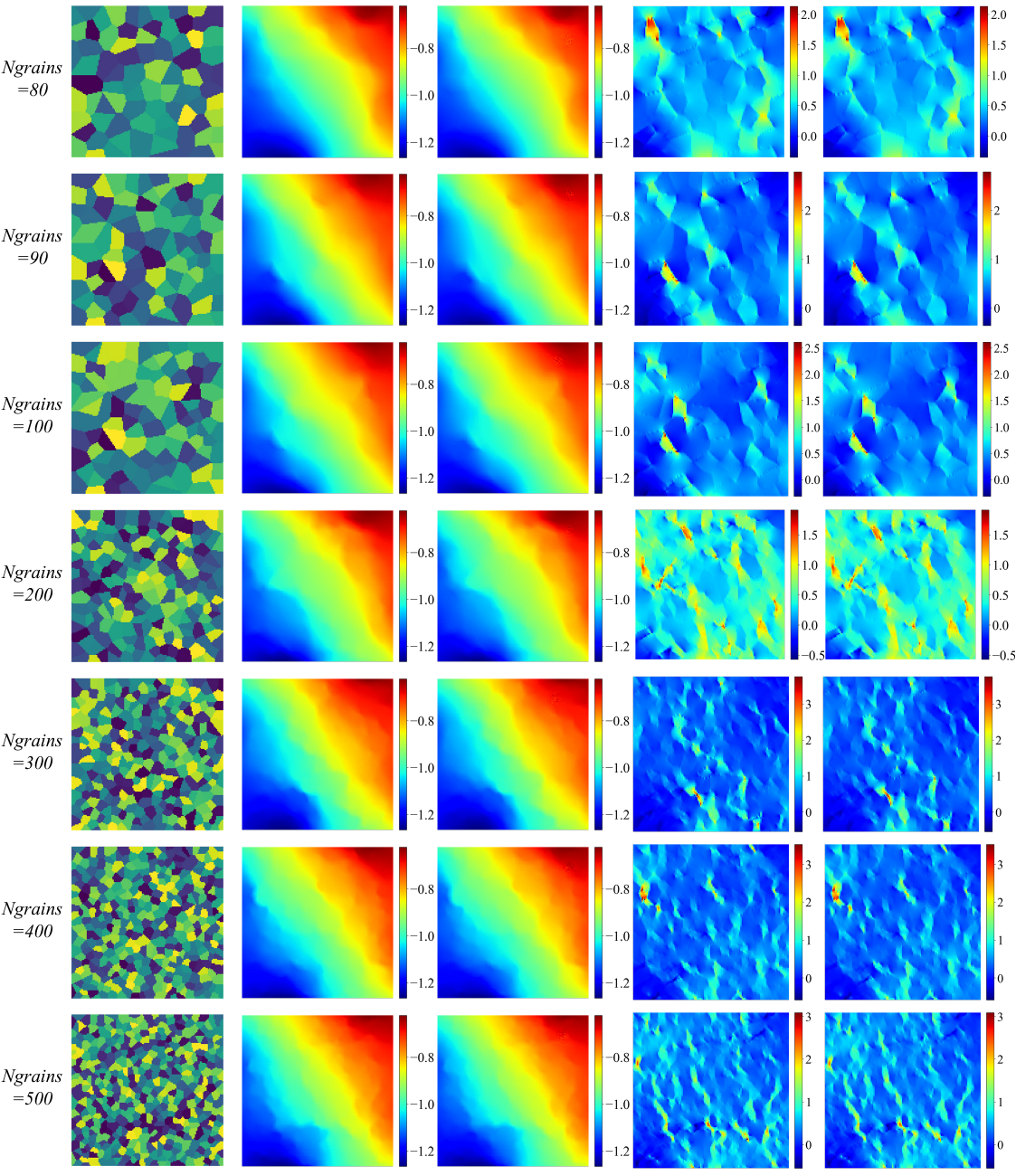}
    \caption{Generalization performance of the PPNO on out-of-distribution microstructures. The model was trained exclusively on 50-grain domains. The test cases shown feature varying grain counts and non-uniform sizes. The columns present, from left to right: (a) the input microstructure, where colors indicate unique grain orientations; (b) the ground-truth $u$ field and (c) the corresponding PPNO prediction; and (d) the ground-truth $\nabla u_x$ field and (e) its prediction. The high fidelity of the predictions underscores the model's robust generalization capabilities. (cont.)}
    \label{fig:ap-test-u-ux}
\end{figure}

\begin{figure}[H]\ContinuedFloat
    \centering
    \includegraphics[width=1.0\linewidth]{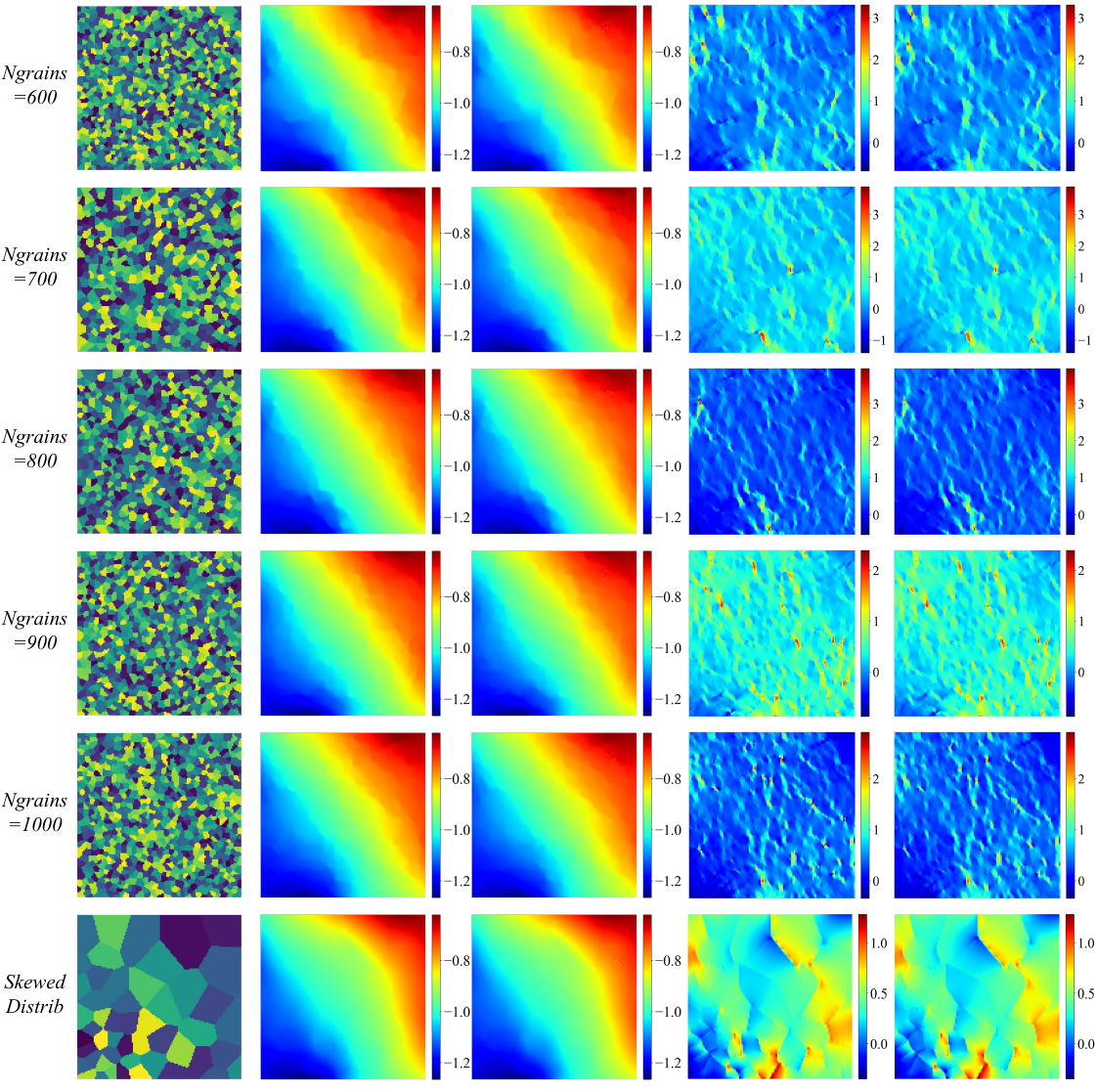}
    \caption{Generalization performance of the PPNO on out-of-distribution microstructures. The model was trained exclusively on 50-grain domains. The test cases shown feature varying grain counts and non-uniform sizes. The columns present, from left to right: (a) the input microstructure, where colors indicate unique grain orientations; (b) the ground-truth $u$ field and (c) the corresponding PPNO prediction; and (d) the ground-truth $\nabla u_x$ field and (e) its prediction. The high fidelity of the predictions underscores the model's robust generalization capabilities. (cont.)}
    \label{fig:ap-test-u-ux}
\end{figure}

\section{Rectangular domains with out-of-distribution microstructure patterns}
\label{sec:rectangular_domains}
To further evaluate the model's generalization capabilities, we extend the analysis to scenarios involving simultaneous variations in both domain geometry and microstructure patterns. We test the model on rectangular domains featuring microstructures distinct from the random Voronoi tessellations used during the pre-training phase. Specifically, we evaluat three out-of-distribution (OOD) patterns: graded Voronoi, hexagonal, and fiber-based microstructures (Types 2, 3, and 4, as detailed in Section \ref{sec:pretrain-out-of-distribution}). These patterns are defined on a rectangular domain $\Omega = [0,k_1L_0]\times[0,k_2L_0]$, where $L_0$ is the characteristic length of the square domains from the pre-training dataset, and the aspect ratios are set by $k_1 = 3.0625$ and $k_2 = 2.375$.

Figure \ref{fig:case-rectangle-vary} presents the domain decomposition analysis for these challenging cases. The results demonstrate that the solver successfully converges in all three scenarios, with the successive error diminishing to zero. This confirms the model's robust performance and its ability to generalize effectively to geometries and microstructures that were not encountered during training.

\begin{figure}[H]
    \centering
    \includegraphics[width=1.0\linewidth]{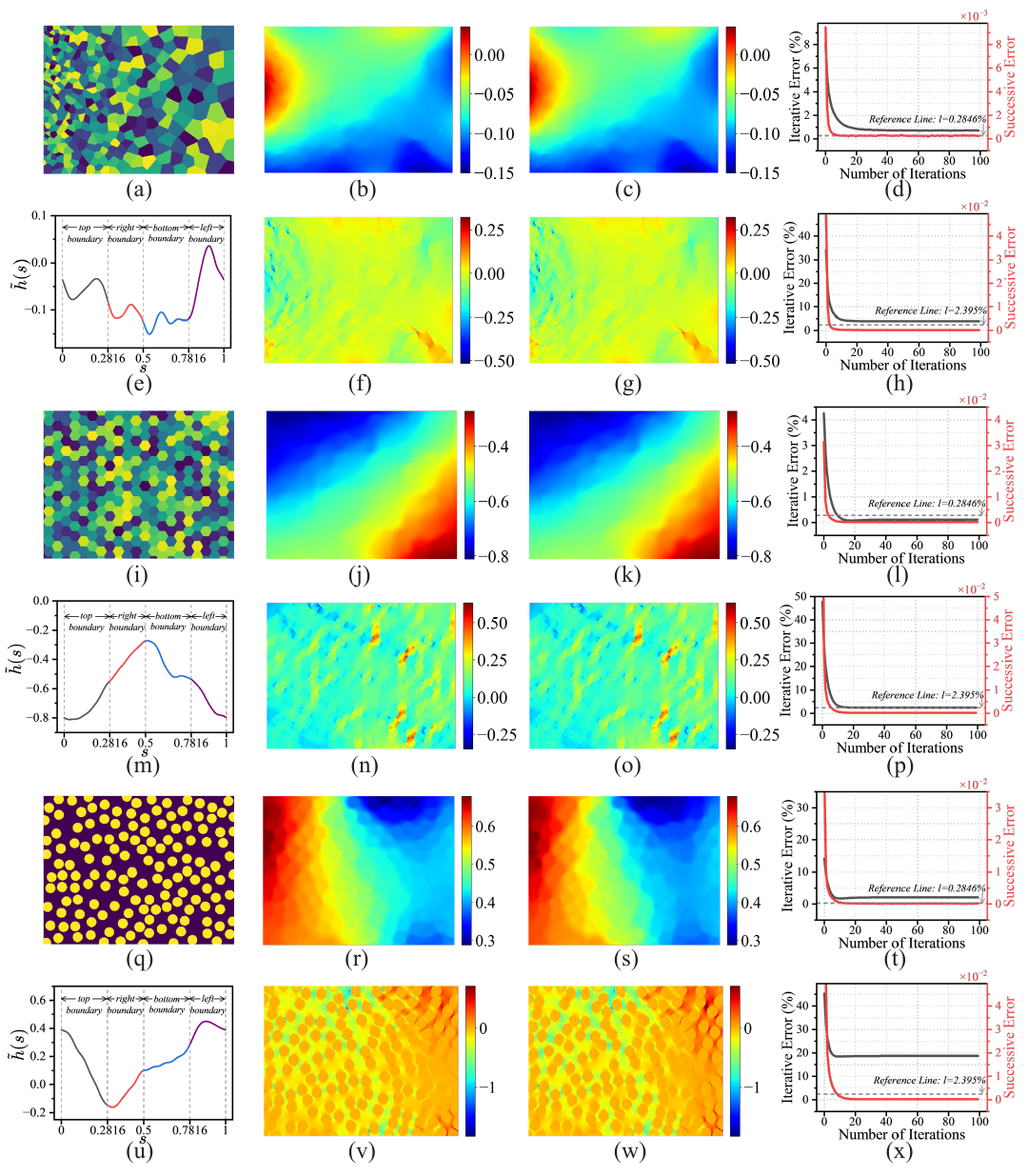}
    \caption{$4 \times 3$ rectangular domains with different microstructures, specifically varying grain sizes(a), hexagonal structures(i) and fibre-composite microstructure(q); (e,m,u) The boundary conditions applied; (b,j,r) Truth solution of the $u$ field; (c,k,s) Learning based solution of the $u$ field; (d,l,t) Error versus number of iterations of the $u$ field; (f,n,v) Truth solution of the $\nabla u$ field; (g,o,w) Learning based solution of the $\nabla u$ field; (h,p,x) Error versus number of iterations of the $\nabla u$ field.}
    \label{fig:case-rectangle-vary}
\end{figure}

\section{Algorithmic Robustness and Invariance}
\label{sec:algorithmic_robust}
To validate the robustness of the proposed algorithm, we conduct a systematic analysis of its performance on a rectangular domain of size $k_1L_0 \times k_2L_0$ ($k_1 = 3.0625$ and $k_2 = 2.375$). The study assesses the algorithm's sensitivity to variations in three fundamental aspects of the problem setup: (i) the microstructure topology, (ii) the applied boundary conditions, and (iii) the initial state of the solution field. Multiple independent trials are conducted for each category, with the final residual loss recorded at convergence.

Figure \ref{fig:robustness} visualizes the distribution of these final loss values. The results show that the loss consistently settles at a negligible magnitude, with minimal variance, regardless of the specific microstructure, boundary conditions, or initial guess used. This outcome demonstrates the algorithm's remarkable invariance to these factors, confirming its stability and reliability across a diverse range of simulation scenarios.

\begin{figure}[H]
    \centering
    \includegraphics[width=0.45\linewidth]{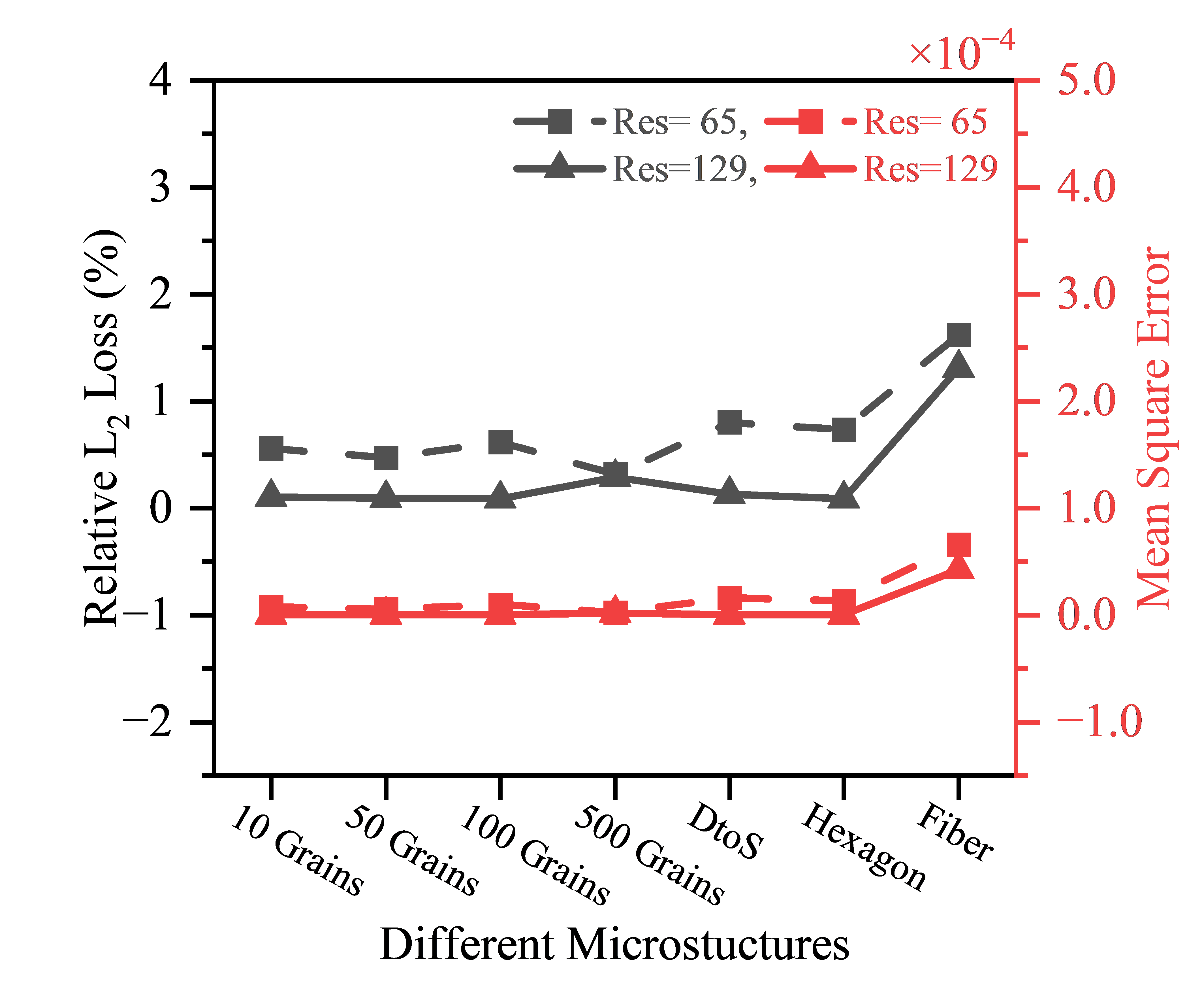}
    \includegraphics[width=0.45\linewidth]{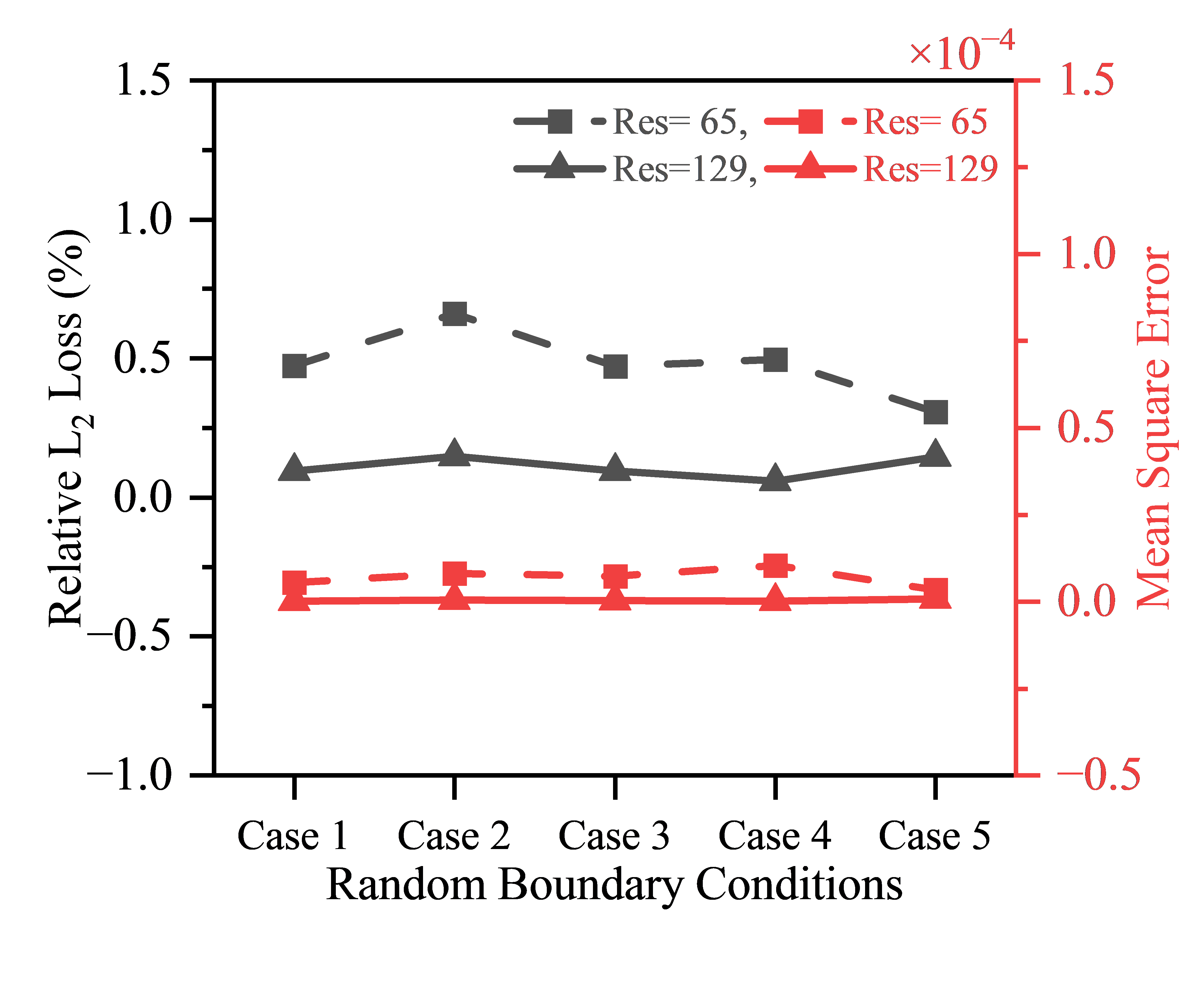}
    \includegraphics[width=0.45\linewidth]{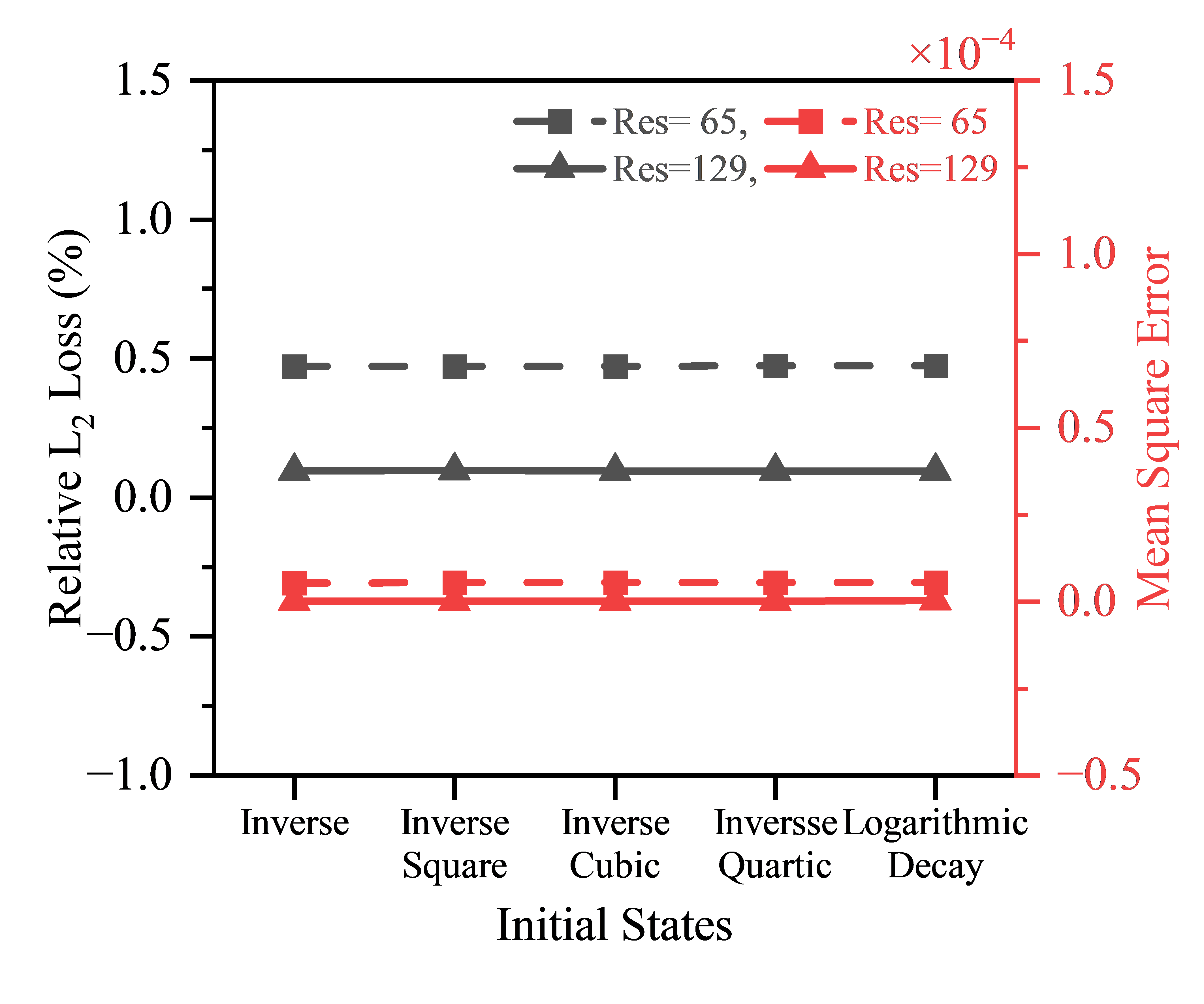}
    \caption{Illustration of the microstructure-invariance, boundary-condition-invariance, and the initial-state-invariance of the algorithm.}
    \label{fig:robustness}
\end{figure}

\section{Record of Boundary Parameters}
\label{boundary_params}
As specified in Equation \ref{eq:random_boundary}:
\begin{align}
    \tilde{g}(s) = & \sum_{n=1}^N \Bigg[\frac{1}{(n+1)^k} a_n \cos(2\pi n (s+s_0) + b_n) + \\& \frac{1}{(n+1)^k} c_n \sin(2\pi n (s+s_0) + d_n) \nonumber + e_n \Bigg], \quad s\in [0, 1),
\end{align}

For the case of the 10 by 10 square domain in section \ref{section_4.2}:

$s_0=0.1660$,

a = [0.5647,
    0.6172,
    0.5058,
    0.5864,
    0.9923,
    0.7664,
    0.9588,
    0.5977,
    0.7735,
    0.9409,
    0.7905,
    0.6819,
    0.7892,
    0.9299,
    0.9494];

b = [2.2939,
    0.9632,
    2.3627,
    2.1272,
    2.1576,
    1.2863,
    2.5159,
    1.8916,
    2.4059,
    2.9958,
    3.2609,
    3.7874,
    1.0620,
    3.7781,
    2.4301];

c = [0.7819,
    0.8284,
    0.8038,
    0.5504,
    0.6545,
    0.9865,
    0.8683,
    0.5224,
    0.8027,
    0.8873,
    0.8411,
    0.8136,
    0.7337,
    0.5664,
    0.9318];

d = [2.1178,
    1.9446,
    3.7620,
    1.7539,
    1.8797,
    3.4129,
    0.8300,
    1.4031,
    3.2959,
    3.6663,
    1.0197,
    3.8854,
    2.1395,
    1.3300,
    3.8331];

e = [-0.1347,
    0.0196,
   -0.0148,
    0.0468,
    0.0072,
    0.0975,
    0.1159,
    0.1680,
    0.2349,
    0.2215,
   -0.2226,
    0.1697,
    0.0274,
   -0.1462,
   -0.1749].


\end{document}